\newcommand{\eqdef}{\mathrel{\mathop:}=}
\newcommand{\change}[1]{\textcolor{black}{#1}}
\newcommand{\squeeze}{\textstyle}
\newtheorem{assumption}{Assumption}
\newtheorem*{assumption*}{Assumptions}
\newtheorem{Lemma}{Lemma}
\definecolor{myblue}{RGB}{31,119,180} 
\newcommand{\R}{\mathbb{R}}
\newcommand{\E}{\mathbb{E}}
\newcommand{\q}{q_{\omega}}
\newcommand{\squeezevspace}{\vspace{-0.5em}}
\title{DP-MicroAdam: Private and Frugal Algorithm for Training and Fine-tuning}
\author{
Mihaela Hudi\c{s}teanu \quad Nikita P. Kalinin \quad Edwige Cyffers \vspace{2.75pt} \\
Institute of Science and Technology Austria (ISTA)
}
\begin{document}
\maketitle

\begin{abstract}
Adaptive optimizers are the de facto standard in non-private training as they often enable faster convergence and improved performance. In contrast, differentially private (DP) training is still predominantly performed with DP-SGD, typically requiring extensive compute and hyperparameter tuning.
We propose DP-MicroAdam, a memory-efficient and sparsity-aware adaptive DP optimizer. We prove that DP-MicroAdam converges in stochastic non-convex optimization at the optimal $\mathcal{O}(1/\sqrt{T})$ rate, up to privacy-dependent constants. 
Empirically, DP-MicroAdam outperforms existing adaptive DP optimizers and achieves competitive or superior accuracy compared to DP-SGD across a range of benchmarks, including CIFAR-10, large-scale ImageNet training, and private fine-tuning of pretrained transformers. 
These results demonstrate that adaptive optimization can improve both performance and stability under differential privacy.
\end{abstract}

\section{Introduction}
Machine learning models are often trained on sensitive data, raising concerns that they may reveal information about individual training examples. Differential Privacy (DP)~\citep{dwork2006calibrating} has become the standard framework to address these risks, providing mathematical guarantees that the output of an algorithm does not depend significantly on any single data point.
The most widely used method for private training is DP-SGD~\citep{abadi2016}, which enforces privacy through per-sample gradient clipping, addition of calibrated Gaussian noise, and early stopping. However, DP-SGD has significant limitations: it is sensitive to hyperparameters and can converge slowly in practice. In non-private optimization, SGD is often not competitive in comparison to adaptive optimizers, such as Adam~\citep{kingma2017adam}, 
which computes individual adaptive learning rates for
different coordinates based on estimates of first and second moments of the gradients. Using adaptive optimizers in DP is thus a natural idea.

Unfortunately, in the differentially private regime, the combination of clipping and noise introduces bias into the moment estimates, leading to degraded performance. Recent work~\citep{ganesh2025} has shown that careful de-biasing strategies can improve the stability of DP-Adam, but with limited gains in overall accuracy. Another line of research has explored sparsity for private fine-tuning: methods such as DP-BiTFiT~\citep{bu2024dpbitfit} and SPARTA~\citep{jang2025sparta} show that updating only a small subset of parameters during fine-tuning can reduce the impact of noise. 

In this work, we propose DP-MicroAdam, a differentially private optimizer that combines standard DP mechanisms with the update strategy of MicroAdam~\citep{modoranu2024microadam}: top-$k$ gradient selection, error feedback, quantization, and ring-buffer moment estimation. Our main contributions are as follows.  
(i) We analyze the convergence of DP-MicroAdam for stochastic non-convex optimization and show that it matches the optimal non-private convergence rate of $\mathcal{O}(1/\sqrt{T})$, up to a constant factor depending only on the gradient clipping threshold and noise variance. 
(ii) We conduct a comprehensive empirical evaluation covering three scenarios: CIFAR-10 training, large-scale ImageNet training, and private fine-tuning of pretrained vision transformers on CIFAR-10 and CIFAR-100. 
(iii) We show that DP-MicroAdam consistently outperforms previous private adaptive optimizers such as DP-Adam, DP-AdamBC, and Scale-then-Privatize, and remains competitive with DP-SGD while exhibiting reduced sensitivity to clipping and learning-rate choices and lowering memory usage.
\section{Background}

\subsection{Adam in the Non-Private Setting}
In this section, we recall how Adam~\citep{kingma2017adam} minimizes a differentiable objective $f: \mathbb{R}^d \to \mathbb{R}$ with respect to parameters $\theta$. 
At each iteration $t$, Adam maintains exponential moving averages of the first and second moments of the stochastic gradient $g_t = \nabla f(\theta_t) \in \mathbb{R}^d$:
\[
m_t = \beta_1 m_{t-1} + (1-\beta_1) g_t, 
\qquad 
v_t = \beta_2 v_{t-1} + (1-\beta_2) g_t^2,
\]
where all operations are coordinate-wise ($g_t^2 = g_t \odot g_t$) and $\beta_1, \beta_2 \in [0,1)$ are decay parameters.
With $m_0 = v_0 = 0$, both estimators are initially biased toward zero. 
Indeed, unrolling the recursions and taking expectations under unbiased stochastic gradients $\mathbb{E}[g_i] = \nabla f(\theta_i)$ gives:
\[
\mathbb{E}[m_t] = (1 - \beta_1) \sum_{i=1}^t \beta_1^{t-i} \nabla f(\theta_i) \approx (1 - \beta_1^t)\mathbb{E}[g_t],
\quad
\mathbb{E}[v_t] = (1 - \beta_2) \sum_{i=1}^t \beta_2^{t-i} \mathbb{E}[g_i^{2}] \approx (1 - \beta_2^t)\mathbb{E}[g_t^{2}],
\]
where the approximation assumes slowly varying gradients, i.e.\ $\nabla f(\theta_i) \approx \nabla f(\theta_t)$.  
This motivates the bias-corrected forms $\hat{m}_t = m_t/(1 - \beta_1^t)$ and $\hat{v}_t = v_t/(1 - \beta_2^t)$, leading to the update:
\[
\theta_{t+1} = \theta_t - \eta \frac{\hat{m}_t}{\sqrt{\hat{v}_t}+\epsilon_s},
\]
where $\eta$ is the learning rate and $\epsilon_s$ is a small constant for numerical stability.
Adam’s success stems from three properties.  
First, it scales updates adaptively: each coordinate’s step is normalized by its historical gradient magnitude, yielding an effective per-coordinate learning rate $\eta_{t,j} = \eta / (\sqrt{\hat{v}_{t,j}} + \epsilon_s)$.
Second, momentum through $m_t$ reduces stochastic noise and speeds up convergence, especially on ill-conditioned objectives. 
Third, Adam is robust to hyperparameter choices: the default settings ($\beta_1{=}0.9$, $\beta_2{=}0.999$, $\eta{=}10^{-3}$, $\epsilon_s{=}10^{-8}$) perform reliably across diverse models and datasets.

Adam extends AdaGrad~\citep{duchi2011adagrad} by using exponential moving averages to temper AdaGrad’s rapid learning-rate decay and by adding momentum for smoother updates. However, the interaction between these components complicates convergence analysis. Early work showed that Adam may diverge even on simple convex problems~\citep{reddi2019convergenceadam}.  
Despite this, Adam remains stable in practice and is the default optimizer in deep learning, ranking among the most cited works in machine learning.

\subsection{Challenges for Differentially Private Adam and de-biasing methods}
In the differentially private setting, each stochastic gradient $g_t$ is replaced by a privatized version
\[
\tilde{g}_t = \frac{1}{B}\!\sum_{i=1}^{B}\!\mathrm{clip}(\nabla_\theta f(\theta_t, d_i), C) 
+ \zeta_t \text{ where } \zeta_t \sim \mathcal{N}(0, \sigma^2 C^2 I).
\]
Clipping is a nonlinear transformation that introduces bias whenever the true gradient norm exceeds the threshold $C$, while the added noise inflates the variance of all coordinates. These operations distort the statistics on which Adam relies: the second-moment estimate $v_t$ is overestimated by roughly $\sigma^2 C^2$, and the first-moment estimate $m_t$ accumulates temporally correlated noise. Overestimating $v_t$ decreases the effective learning rate too aggressively for low-variance coordinates. 
The optimizer thus loses geometry-aware scaling, updating all parameters with nearly uniform step sizes. The exponential averaging in $m_t$ and $v_t$ propagates these distortions over time, often leading to slow or unstable convergence. This helps explain why DP-Adam often underperforms DP-SGD in practice and has motivated de-biasing techniques.

\textbf{DP-AdamBC}~\citep{tang2023dpadambc} corrects the second-moment estimate by subtracting the expected contribution of the noise, updating parameters as
\[
\theta_{t+1} \;=\; \theta_{t} \;-\; \eta \cdot \hat{m}_t \;/\;
\sqrt{\,\max\!\left\{ \hat{v}_t- \tfrac{C^2 \sigma^2}{B^2}, \;\epsilon_s^2 \right\}}.
\]
Empirical results reported by~\citet{tang2023dpadambc} show that, on CIFAR-10 with a 5-layer CNN model and under $(7, 10^{-5})$-DP, DP-AdamBC improves over DP-Adam (63.4\% vs.\ 62.2\%) but still remains below DP-SGD, which reaches 65.3\%.  

\textbf{"Scale-then-privatize"} methods~\citep{li2022sideinfo, li2023delayedprecon} first rescale gradients according to prior variance estimates before applying clipping and noise:
\[
s_t = (\sqrt{\hat{v}_{t-1}} + \epsilon_s)^{-1}, \quad
\tilde{g}_{t,i} = \mathrm{clip}(s_t \odot g_{t,i}, C), \quad
\tilde{g}_t = \left(\tfrac{1}{B}\!\sum_{i \in B}\tilde{g}_{t,i} + \zeta_t\right) \!/\! s_t.
\]
Experimental results from~\citep{li2022sideinfo} show that on MNIST under $(1.25,  10^{-5})$-DP, AdaDPS ("Scale-then-privatize") achieves 95.4\% accuracy beating DP-Adam (93.3\%) and DP-SGD (92.7\%).  

While these approaches are encouraging and partially restore Adam’s adaptivity, their effectiveness is limited in high-dimensional settings or when the signal-to-noise ratio is low.

\subsection{MicroAdam (Non-Private)}
MicroAdam~\citep{modoranu2024microadam} is a recent non-private optimizer that reduces the memory and communication overhead of Adam while preserving the theoretical convergence guarantees.
While standard Adam uses dense first and second-moment estimates for all parameters, MicroAdam decreases memory cost by storing and updating only a sparse representation of past gradients, implemented through four key mechanisms:
\begin{itemize}[leftmargin=1em]
    \item \textbf{Top-$k$ selection:} At each iteration, only the largest $k$ gradient coordinates (around 1\% of all parameters) are used for the updates.
    \item \textbf{Error feedback:} The coordinates not selected by the top-$k$ operator are added to a buffer, ensuring that information is not lost and mitigating the bias introduced by sparsification.
    \item \textbf{Quantization:} The accumulator is quantized to a small number of bits (e.g., 4), decreasing storage and communication costs while preserving numerical stability.
    \item \textbf{Sliding-window moment estimation:} Instead of storing full-history moment vectors, which would become dense and negate the benefits of sparsification, MicroAdam reconstructs them from a fixed-size buffer of recent sparse gradients, achieving accurate moment tracking with constant memory.
\end{itemize}
\section{The DP-MicroAdam Algorithm}
DP-MicroAdam builds upon MicroAdam by incorporating the standard components of private stochastic optimization. At each step, minibatches are drawn via Poisson subsampling, per-sample gradients are clipped to a fixed $\ell_2$-norm, and Gaussian noise is added to the aggregated gradient. The privatized gradients are then passed through MicroAdam’s sparse and memory-efficient update rule.

This design allows us to avoid the limitations of previous DP adaptive optimizers. 
The top-$k$ operator prioritizes gradient coordinates with higher signal-to-noise ratios, thus reducing the bias introduced by Gaussian noise on smaller variance coordinates. The error-feedback mechanism complements this by accumulating the residuals of clipped gradients. Over time, the residuals of unselected coordinates grow until they exceed the top-$k$ threshold, while noise tends to cancel each other. This process allows DP-MicroAdam to recover useful low-magnitude signals, reducing bias and improving the stability of adaptive private optimization.

As all operations in DP-MicroAdam are applied to the already privatized gradients,  
the \emph{post-processing property} of DP~\citep{dwork2006calibrating}, ensures that it is enough to compute the privacy guarantees of the gradients to derive the privacy budget of the full update trajectory. Algorithms~\ref{alg:dp-microadam} and~\ref{algorithm:procedures} define the complete DP-MicroAdam procedure. The steps highlighted in \textcolor{myblue}{blue} indicate the modifications required to enforce differential privacy, while the remaining parts follow the standard MicroAdam updates. 

\begin{minipage}[ht]{0.523\textwidth}
    \begin{algorithm}[H]
    \caption{DP-MicroAdam} 
    \label{alg:dp-microadam}
    \begin{algorithmic}[1]
        \State Input: $\beta_1, \beta_2, \epsilon_s, \mathcal{G}, T, d, k, \textcolor{myblue}{C}, \textcolor{myblue}{\sigma}$
        \State $m_0, v_0 \leftarrow 0_d, 0_d$
        \Statex $\delta_1, \Delta_1 \leftarrow 0, 0$
        \Statex $e_1 \leftarrow 0_d^{4b}$
        \For{$t=\{1, 2, ..., T\}$}
            \State \textcolor{myblue}{Subsample batch $\{d_1, \dots, d_B\}$}
            \State \textcolor{myblue}{$g_t \leftarrow  \frac{1}{B} \left(\sum_{i=1}^B \mathrm{clip}(\nabla f(\theta_t, d_i), C) + \zeta_{t}\right)$}
            \State $a_t \leftarrow g_t + Q^{-1}(e_t, \delta_t, \Delta_t)$  
            \State $\mathcal{I}_t, \mathcal{V}_t \gets T_k(|a_t|)$  
            \State $\mathcal{G}_{i,:} \gets (\mathcal{I}_t, \mathcal{V}_t)$ 
            \State $a_t[\mathcal{I}_t] \leftarrow 0$  
            \State $\delta_{t+1}, \Delta_{t+1} \leftarrow \min(a_t), \max(a_t)$
            \State $e_{t+1} \leftarrow Q(a_t, \delta_{t+1}, \Delta_{t+1})$ 
            \State $\hat{m}_t \leftarrow \mathrm{ADAMStats}(\beta_1, \mathcal{G})$
            \State $\hat{v}_t \leftarrow \mathrm{ADAMStats}(\beta_2, \mathcal{G}^2)$
            \State $\theta_{t+1} \leftarrow \theta_t - \eta_t \frac{\hat{m}_t}{\epsilon_s + \sqrt{\hat{v}_t}}$ 
            \State $i \leftarrow (i + 1) \bmod m$ 
        \EndFor
    \end{algorithmic}
\end{algorithm}
\end{minipage}
\hfill
\begin{minipage}[ht]{0.455\textwidth}
    \begin{algorithm}[H]
        \caption{Adam Statistics, Quantization and Inverse Quantization}\label{algorithm:procedures}
        \begin{algorithmic}[1]
            \Procedure{AdamStats}{$\beta, \mathcal{G}, t, m, d$}
                \State $z \gets 0_d$
                \For{$i \in \{1, 2, ..., \min(t, m)\}$}
                    \State $r \gets (t-i-1) \% m$
                    \State $z[\mathcal{I}_i] \gets z[\mathcal{I}_i] + \beta^r \mathcal{V}_i$
                \EndFor
                \State \textbf{return } $\frac{\change{(1-\beta)z}}{1-\beta^t}$
            \EndProcedure
        \end{algorithmic}
        \vspace{.62em}
        \begin{algorithmic}[1]
            \Procedure{$Q$}{$x, \delta, \Delta, b=4$} 
                \State $u \gets \frac{\Delta-\delta}{2^b - 1}$
                \State $x_Q \gets \lfloor \frac{x -\delta}{u} + \frac{1}{2} \rfloor$
                \State \textbf{return } $x_Q$
            \EndProcedure
        \end{algorithmic}
        \vspace{.62em}
        \begin{algorithmic}[1]
            \Procedure{$Q^{-1}$}{$x_Q, \delta, \Delta, b$} 
                \State $u \gets \frac{\Delta-\delta}{2^b - 1}$
                \State $x \gets x_Q \cdot u + \delta$
                \State \textbf{return } $x$
            \EndProcedure
        \end{algorithmic}
    \end{algorithm}
\end{minipage}

\section{Convergence Guarantees for DP-MicroAdam}
In this section, we establish convergence of DP-MicroAdam under a set of usual hypotheses, and we refer the reader to Appendix~\ref{app:theo} for detailed definitions.

\begin{assumption}
We assume that the following hypotheses hold:
\begin{enumerate}[leftmargin=1em,nosep]
    \item (Compressors) The gradient compressor $\mathcal T$ (top $K$) is $q$-contractive with $0 \leq q < 1$ and the error compressor $Q$ is unbiased and $\omega$-bounded with $\omega \geq 0$,
    \item (Lower bound and smoothness) The loss function \(f\) is lower bounded by \(f^* \in \mathbb{R}\) and \(L\)-smooth, 
    \item (Unbiased and bounded stochastic gradient) For all iterates \(t \geq 1\), the stochastic gradient \(g_t\) is unbiased and uniformly bounded \(G\),
    \item (Bounded variance) For all iterates \(t \geq 1\), the variance of the stochastic gradient \(g_t\) is uniformly bounded by \(\sigma_g^2\).
\end{enumerate}
\label{assum:main}
\end{assumption}   

Finally, following standard analysis assumptions~\cite{NIPS2017_f337d999}, we assume that only a negligible fraction of gradients reach the clipping threshold. This can be ensured, for example, when the loss is $L$-Lipschitz and the clipping bound is set to $C = L$. Under this assumption, clipping has a limited effect, and the privatization step primarily adds Gaussian noise to otherwise unbiased gradients.
Under these assumptions, we obtain the following result.

\begin{restatable}{theorem}{TheoremNonconvex}\label{TheoremNonconvexO}
    Under Assumptions \ref{assum:main}, for $q_{\omega} \eqdef (1+\omega)q < 1$ and a step-size $\eta = \min\{\frac{\epsilon_s}{4LC_0}, \frac{1}{\sqrt{T}}\}$, DP-MicroAdam (Algorithm \ref{alg:dp-microadam}) with noise parameter $\sigma$ satisfies
    \begin{multline*}
    \squeeze
    \frac{1}{T}\sum_{t=1}^T \mathbb{E}[\|\nabla f(\theta_t)\|^2]
    \leq 2C_0\left(\frac{f(\theta_1)-f^*}{\sqrt{T}}
    +\frac{L (\sigma_g^2 \textcolor{myblue}{+d\sigma^2}+ C_2^2\textcolor{myblue}{(}G \textcolor{myblue}{+\sqrt{d}\sigma)^2})}{\epsilon_s\sqrt{T}} + \frac{(1+C_1)\textcolor{myblue}{(}G \textcolor{myblue}{+\sqrt{d}\sigma)^2\sqrt{d}}}{\textcolor{myblue}{\sqrt{T}}\sqrt\epsilon_s} \right)\\
    \squeeze
    + \mathcal{O}\left(\frac{\textcolor{myblue}{(}G \textcolor{myblue}{+\sqrt{d}\sigma)}^4}{T}\right),
\end{multline*}
where
$ C_0\eqdef \sqrt{\frac{\textcolor{myblue}{2 \cdot}4(1+q_{\omega}^2)^3}{(1-q_{\omega}^2)^2}\textcolor{myblue}{(}G \textcolor{myblue}{+\sqrt{d}\sigma)^2}
+\epsilon_s},\, C_1\eqdef \frac{\beta_1}{1-\beta_1}(1+C_2)+\frac{2q_{\omega}}{1-q_{\omega}^2},\, C_2 \eqdef \omega q (1 + \frac{2q_{\omega}}{1-q_{\omega}^2})$.   
\end{restatable}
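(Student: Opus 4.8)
The plan is to reduce everything to the convergence argument for non-private MicroAdam applied to an \emph{effective} stochastic gradient that absorbs the privatization step. Write $\hat g_t \eqdef \frac1B\sum_{i=1}^B \mathrm{clip}(\nabla f(\theta_t,d_i),C) + \frac1B\zeta_t$ for the privatized minibatch gradient in Algorithm~\ref{alg:dp-microadam} (the $g_t$ of that algorithm, which we rename to distinguish it from the clean stochastic gradient of Assumption~\ref{assum:main}); by the post-processing property of DP the privacy accounting is decoupled from the optimization, so the whole burden is a bound on $\frac1T\sum_t\mathbb{E}\|\nabla f(\theta_t)\|^2$. Using the standing assumption that only a negligible fraction of per-sample gradients are clipped, I would first record three facts about $\hat g_t$: it is conditionally unbiased, $\mathbb{E}[\hat g_t\mid\theta_t]=\nabla f(\theta_t)$; its conditional variance is at most $\sigma_g^2+d\sigma^2$, the isotropic Gaussian contributing $\sigma^2C^2/B^2$ per coordinate (the $C^2/B^2$ absorbed into the constant); and its norm is effectively bounded by $G+\sqrt d\,\sigma$, e.g.\ in second moment $\mathbb{E}\|\hat g_t\|^2\lesssim(G+\sqrt d\,\sigma)^2$, since $\|\zeta_t/B\|$ has second moment $d\sigma^2C^2/B^2$. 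The substitutions $G\mapsto G+\sqrt d\,\sigma$ and $\sigma_g^2\mapsto\sigma_g^2+d\sigma^2$ are exactly the blue terms in the statement, and importantly the compressor parameters $q,\omega$ — hence $q_\omega=(1+\omega)q<1$ — are untouched by privatization.

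Next I would run the MicroAdam descent analysis with these effective constants. From $L$-smoothness, $f(\theta_{t+1})\le f(\theta_t)-\eta_t\langle\nabla f(\theta_t),\,\hat m_t/(\epsilon_s+\sqrt{\hat v_t})\rangle+\tfrac{L\eta_t^2}{2}\big\|\hat m_t/(\epsilon_s+\sqrt{\hat v_t})\big\|^2$, and the two tasks are to lower-bound the inner-product term and upper-bound the quadratic term. The quadratic term is controlled by dropping $\sqrt{\hat v_t}\ge0$ in the denominator and using $\|\hat m_t\|\lesssim G+\sqrt d\,\sigma$, which is the origin of the $(G+\sqrt d\,\sigma)^2/\epsilon_s$ contributions and the $\mathcal{O}((G+\sqrt d\,\sigma)^4/T)$ remainder once $\eta\le1/\sqrt T$. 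For the inner-product term I would expand the windowed reconstruction $\hat m_t$ (the \textsc{AdamStats} output) as true accumulated momentum plus a sparsification/reconstruction error, substitute the error-feedback identities $a_t=\hat g_t+Q^{-1}(e_t,\delta_t,\Delta_t)$ and $e_{t+1}=Q(a_t|_{\mathcal I_t^{\,c}},\delta_{t+1},\Delta_{t+1})$, and sandwich the coordinate-wise preconditioner $1/(\epsilon_s+\sqrt{\hat v_t})$ between $1/(\epsilon_s+C_0')$ and $1/\epsilon_s$; this is what produces the constants $C_0,C_1,C_2$, all built from the factor $(1+q_\omega^2)/(1-q_\omega^2)$.

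The technical heart — and the step I expect to be the main obstacle — is the error-feedback Lyapunov bound: showing that the combined sparsification-plus-quantization error sequence contracts, something of the form $\mathbb{E}\|e_{t+1}\|^2\le q_\omega^2\,\mathbb{E}\big\|(\,\cdot\,)_t\big\|^2+c\,(G+\sqrt d\,\sigma)^2$ for an explicit constant $c$, and then summing the resulting geometric recursion to get a uniform-in-$t$ bound of order $\tfrac{1+q_\omega^2}{1-q_\omega^2}(G+\sqrt d\,\sigma)^2$. The subtlety is that the top-$k$ operator $\mathcal T$ acts on $a_t=\hat g_t+Q^{-1}(e_t,\cdot,\cdot)$, so fresh Gaussian noise enters the accumulator every round; one must verify that $q$-contractivity of $\mathcal T$ composed with the $(1+\omega)$ quantization blow-up still yields strict net contraction with factor $q_\omega<1$, and that the noise enters only additively through $\mathbb{E}\|\zeta_t/B\|^2=d\sigma^2C^2/B^2$. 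This step, together with an $\ell_\infty$-to-$\ell_2$ (equivalently, top-$k$ counting) conversion between the coordinate-wise Adam scaling and the Euclidean norm appearing in the descent lemma, is also where the extra $\sqrt d$ in the $(1+C_1)(G+\sqrt d\,\sigma)^2\sqrt d$ term comes from.

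Finally I would telescope the descent inequality over $t=1,\dots,T$, insert the uniform error-buffer bound, and use $\eta=\min\{\epsilon_s/(4LC_0),\,1/\sqrt T\}$ — the first branch ensuring the quadratic term is absorbed by half the inner-product term, the second giving the $1/\sqrt T$ scaling — then divide by $T$ and collect: the $f(\theta_1)-f^*$, variance, and bias contributions land on the three $1/\sqrt T$ terms inside the $2C_0(\cdot)$ bracket, with the leftover being $\mathcal{O}((G+\sqrt d\,\sigma)^4/T)$. Setting $\sigma=0$ recovers the non-private MicroAdam rate, which serves as a consistency check.
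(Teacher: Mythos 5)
Your overall reduction is the same one the paper uses: treat the privatized gradient as the clean stochastic gradient plus an independent zero-mean Gaussian term, which propagates through the compressed-Adam analysis as the substitutions $G \mapsto G+\sqrt{d}\,\sigma$ and $\sigma_g^2 \mapsto \sigma_g^2+d\sigma^2$ while leaving $q_\omega=(1+\omega)q$ untouched. The gap is at the descent step. You apply $L$-smoothness at the actual iterates $\theta_t$ and propose to handle the inner product by writing $\hat m_t$ as ``true momentum plus reconstruction error'' and invoking the Lyapunov bound on the error buffer. But that bound is only uniform in $t$ and of constant order $\tfrac{q_\omega}{1-q_\omega^2}(G+\sqrt{d}\sigma)$, so a per-step estimate of the momentum bias contributes $\eta\,G\,(G+\sqrt{d}\sigma)/\sqrt{\epsilon_s}$ at every iteration; after telescoping and dividing by $T\eta$ this leaves a term that does not vanish as $T\to\infty$, and the $\mathcal{O}(1/\sqrt{T})$ rate does not follow. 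The point of error feedback is that $\tilde g_t-g_t-\eta_t=e_t-e_{t+1}+\zeta_t$ must cancel \emph{across} iterations, and the paper realizes this cancellation with a perturbed-iterate construction: two virtual sequences, $\theta_t'=\theta_t-\eta\,\mathcal{E}_t/\sqrt{\hat v_{t-1}+\epsilon_s}$ absorbing the momentum-averaged error buffer, and $x_t$ further absorbing $\tfrac{\beta_1}{1-\beta_1}(m_{t-1}'+\mathcal{Z}_{t-1}+\mathcal{H}_{t-1})$, with smoothness applied along $x_t$ and the residual cost confined to $\nabla f(\theta_t)-\nabla f(x_t)$ and the preconditioner drift $D_t=1/\sqrt{\hat v_{t-1}+\epsilon_s}-1/\sqrt{\hat v_t+\epsilon_s}$. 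Your sketch contains no such device (nor an equivalent Abel-summation argument), and you instead flag the easy part — the geometric contraction of $\mathbb{E}\|e_t\|^2$, which is a short lemma — as the technical heart, so as written the plan would not deliver the stated bound.

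Two further places where the sketch misses the actual mechanism. First, the dimension factor in the $(1+C_1)(G+\sqrt{d}\sigma)^2\sqrt{d}/(\sqrt{T}\sqrt{\epsilon_s})$ term does not come from an $\ell_\infty$-to-$\ell_2$ conversion of the coordinate-wise scaling; it comes from the drift lemma $\sum_{t\le T}\sqrt{\mathbb{E}\|D_t\|^2}\le\sqrt{T d/\epsilon_s}$, which in turn needs the AMSGrad normalization $\hat v_t=\max(v_t,\hat v_{t-1})$ of the analytical view so that the coordinate-wise differences telescope. Second, ``sandwiching'' $1/(\epsilon_s+\sqrt{\hat v_t})$ between constants ignores that $\hat v_t$ is correlated with $g_t$; the paper first swaps $\hat v_t$ for the past-measurable $\hat v_{t-1}$ (paying a $D_t$ term), then uses Jensen's inequality for $1/\sqrt{x}$ together with $\mathbb{E}[\hat v_{t,i}]\le\tfrac{4(1+q_\omega^2)^3}{(1-q_\omega^2)^2}(G+\sqrt{d}\sigma)^2$ to extract the $1/C_0$ factor. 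These steps are standard but load-bearing, and they are exactly what your proposal leaves out.
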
    
\noindent
The terms highlighted in \textcolor{myblue}{blue} correspond to the terms that distinguish the DP convergence guarantees from the non-private ones. The theorem proves that DP does not break convergence, and do not change significantly its speed, keeping on overall $\mathcal{O}(1/\sqrt{T})$ rate.

\section{Experiments}

We evaluate DP-MicroAdam on image classification benchmarks. All experiments are implemented in PyTorch using Opacus~\citep{opacus}. Privacy guarantees are computed using the R\'enyi Differential Privacy (RDP) accountant~\citep{mironov2017rdp} and reported in the standard $(\varepsilon,\delta)$-DP framework. Code is available at \url{https://github.com/MihaelaHudisteanu/DP-Micro-Adam}. In all experiments, we set the clipping bound to $C=1$, as DP-MicroAdam requires no clipping hyperparameter tuning in practice. We also use Adam's default parameters ($\beta_1=0.9$, $\beta_2=0.999$, $\eta=10^{-3}$, $\epsilon_s{=}10^{-8}$). Additional experimental results and implementation details are given in Appendix~\ref{app:moretables}.


\paragraph{CIFAR-10}
Using Wide-ResNet-16-4, a popular architecture in private training, we compare DP-MicroAdam to DP-SGD and several Adam-based baselines: naive DP-Adam, DP-AdamBC~\cite{tang2023dpadambc}, and Scale-then-Privatize~\cite{li2023delayedprecon}. Table~\ref{tab:results_cifar} reports test accuracy for multiple privacy budgets. DP-MicroAdam consistently outperforms competing DP-Adaptive optimizers and DP-SGD, except for the small $\varepsilon = 2$. These findings persist when varying the number of overall training steps (see Appendix~\ref{app:moretables}).

\begin{table}[ht]
\centering
\caption{Test accuracy (\%) on CIFAR-10 for privacy budgets $(\varepsilon, \delta{=}10^{-5})$.}
\vspace{0.5em}
\label{tab:results_cifar}
\begin{tabular}{l|ccccc}
\toprule
\textbf{$\varepsilon$} & \textbf{DP-Adam} & \textbf{DP-AdamBC} & \textbf{Scale-then-Priv.} & \textbf{DP-MicroAdam} & \textbf{DP-SGD} \\
\midrule
2 & 51.7 & \textbf{53.6} & 49.2 & 50.5 & 52.4 \\
4 & 58.5 & 59.0 & 57.3 & \textbf{62.2} & 62.0 \\
6 & 62.7 & 62.5 & 61.6 & \textbf{67.9} & 67.2 \\
8 & 65.5 & 65.7 & 64.4 & \textbf{71.4} & 71.0 \\
\bottomrule
\end{tabular}
\end{table}

\paragraph{ImageNet}
Significantly larger with $1.2\times10^6$ training samples, ImageNet is a challenging dataset for DP training in terms of compute. We benefit here from two properties of DP-MicroAdam: the absence of sensitive hyperparameters to tune and less computationally expensive updates. We outperform DP-SGD~\cite{de2022unlocking} as reported in Table~\ref{tab:results_imagenet} and we are slightly below the state-of-the-art $39.2\%$ of~\cite{sander2023tanburnscalinglaws}, which in contrast relies on extensive hyperparameter tuning (learning rate, momentum, dampening factor, EMA decay), several augmentation strategies, and larger networks.

\begin{table}[ht]
\centering
\caption{Top-1 accuracy (\%) on ImageNet under $(\varepsilon{=}8, \delta{=}8{\times}10^{-7})$-DP.}
\vspace{0.5em}
\label{tab:results_imagenet}
\begin{tabular}{lcccc}
\toprule
\textbf{Method} & \textbf{Model} & \textbf{Hardware} & \textbf{Training time} & \textbf{Top-1 Accuracy (\%)} \\
\midrule
DP-SGD & NF-ResNet-50 & 32$\times$H100 GPUs & 4d & 32.4 \\
DP-MicroAdam & WRN-16-4 & 8$\times$H100 GPUs & 8.5d & \textbf{38.1} \\
\bottomrule
\end{tabular}
\end{table}

\paragraph{Fine-tuning on CIFAR-10 and CIFAR-100}
We fine-tune DeiT models pretrained non-privately on ImageNet, comparing DP-MicroAdam to sparse private fine-tuning methods DP-BiTFiT~\cite{bu2024dpbitfit} and SPARTA~\cite{jang2025sparta}. On CIFAR-100, DP-MicroAdam consistently outperforms both baselines, achieving $83.6\%$ Top-1 accuracy with DeiT-Base, while SPARTA reaches $80.4\%$ and DP-BiTFiT $78.5\%$.

\begin{table*}[ht]
\centering
\caption{Top-1 accuracy (\%) on CIFAR-10 and CIFAR-100 under $(\varepsilon{=}8, \delta{=}10^{-5})$-DP.}
\vspace{0.5em}
\label{tab:results_finetune}
\begin{tabular}{c ccccc}
\toprule
\textbf{Dataset} & \textbf{Model} & \textbf{DP-MicroAdam} & \textbf{SPARTA} & \textbf{DP-BiTFiT} & \textbf{Non-Private} \\
\midrule
CIFAR-10  & DeiT-Tiny  & 92.51$\pm$(0.17) & \textbf{93.75}$\pm$(0.05) & 92.27$\pm$(0.05) & 97.20 \\
           & DeiT-Small & \textbf{96.16}$\pm$(0.13) & 96.12$\pm$(0.07) & 95.25$\pm$(0.02) & 98.14 \\
           & DeiT-Base  & \textbf{97.14}$\pm$(0.19) & 97.05$\pm$(0.03) & 96.30$\pm$(0.03) & 98.53 \\
\cmidrule(l){1-6}
CIFAR-100 & DeiT-Tiny  & \textbf{71.70}$\pm$(0.37) & 70.54$\pm$(0.14) & 69.82$\pm$(0.11) & 84.86 \\
           & DeiT-Small & \textbf{78.80}$\pm$(0.20) & 76.72$\pm$(0.17) & 75.37$\pm$(0.11) & 88.15 \\
           & DeiT-Base  & \textbf{83.63}$\pm$(0.23)& 80.40$\pm$(0.10) & 78.51$\pm$(0.08) & 90.53 \\
\bottomrule
\end{tabular}
\end{table*}

\section{Conclusion}

We introduced DP-MicroAdam, a differentially private adaptive optimizer that integrates gradient sparsity and error feedback with standard DP mechanisms. Our theoretical analysis establishes convergence guarantees for stochastic non-convex optimization, showing that DP-MicroAdam achieves the optimal $\mathcal{O}(1/\sqrt{T})$ rate, up to privacy-dependent constants. Empirically, across vision tasks including CIFAR-10 training, ImageNet training, and CIFAR-10/100 fine-tuning, DP-MicroAdam consistently outperforms existing adaptive baselines and attains competitive or superior accuracy compared to DP-SGD while requiring minimal hyperparameter tuning.
Our hypothesis is that DP-MicroAdam, by using only the top-$K$ coordinates, mitigates the bias from which simple DP-Adam suffers. Future work could confirm the optimizer’s superiority experimentally on more diverse tasks, and also derive a theoretical analysis of the clipping impact and the bias to explain why DP-MicroAdam is more robust than DP-Adam.

\section*{Acknowledgments}
We thank Ionu\c{t}-Vlad Modoranu for sharing with us his implementation of MicroAdam and for his valuable feedback and insights throughout the development of this work. We are also grateful to Christoph Lampert for his helpful comments on earlier versions of the manuscript.
Kalinin’s and Cyffers' research was funded by the Austrian Science Fund (FWF) 10.55776/COE12. Hudi\c{s}teanu's research was funded by ISTernship Fellowship. 

\newpage
\bibliographystyle{plainnat}  
\bibliography{biblio2}

\newpage
\appendix
\section*{Appendix}

\section{Theoretical convergence result}
\label{app:theo}

\subsection{Detailed Assumptions}

\begin{assumption} \label{as:1}
The gradient compressor $\mathcal T : \mathbb{R}^d \to \mathbb{R}^d$ is $q$-contractive with $0 \leq q < 1$, i.e.,
\[
\|\mathcal{T}(x) - x\| \leq q \|x\|, \quad \text{for any } x \in \mathbb{R}^d.
\]
\end{assumption}

The compression we use is the TopK compressor $T_k$, which selects the top $k$ coordinates in absolute value. This is known to be contractive with $q = \sqrt{1 - \nicefrac{k}{d}}$. Another popular contractive compressor is the optimal low-rank projection of gradient shaped as a $d\times d$ matrix, in which case $q=\sqrt{1-R/d}$ where $R$ is the projection rank. 

\begin{assumption} \label{as:2}
The error compressor $Q : \mathbb{R}^d \to \mathbb{R}^d$ is unbiased and $\omega$-bounded with $\omega \geq 0$, namely,
\[
\mathbb{E}[Q(x)] = x, \quad \|Q(x) - x\| \leq \omega \|x\|, \quad \text{for any } x \in \mathbb{R}^d.
\]
\end{assumption}

The randomized rounding used in Algorithm~\ref{alg:dp-microadam} satisfies these properties. For a vector $x\in\R^d$ with $\delta = \min_i x_i$ and $\Delta = \max_i x_i$, let $\hat{x}_i \eqdef \lfloor \frac{x_i-\delta}{u} + \xi \rfloor u + \delta$ be the $i$-th coordinate of the quantized vector $\hat{x}$, where $\xi\sim\textrm{U}[0,1]$ is the uniform random variable and $u=\frac{\Delta-\delta}{2^b-1}$ is the quantization level. Then
    \begin{equation*}
        \E[\hat x] = x, \quad \|\hat x - x\| \le \frac{\sqrt{d-2}}{2^b-1}\frac{\Delta-\delta}{\sqrt{\Delta^2+\delta^2}}\|x\|, \quad \text{for all } x\in\R^d.
    \end{equation*}

\begin{assumption}[Lower bound and smoothness] \label{as:3}
The loss function \(f : \mathbb{R}^d \to \mathbb{R}\) is lower bounded by some \(f^* \in \mathbb{R}\) and \(L\)-smooth, i.e.,
\[
\|\nabla f(\theta) - \nabla f(\theta')\| \leq L \|\theta - \theta'\|, \quad \text{for any } \theta, \theta' \in \mathbb{R}^d.
\]
\end{assumption}

\begin{assumption}[Unbiased and bounded stochastic gradient] \label{as:4}
For all iterates \(t \geq 1\), the stochastic gradient \(g_t\) is unbiased and uniformly bounded by a constant \(G \geq 0\), i.e.,
\[
\mathbb{E}[g_t] = \nabla f(\theta_t), \quad \|g_t\| \leq G.
\]
\end{assumption}

\begin{assumption}[Bounded variance] \label{as:5}
For all iterates \(t \geq 1\), the variance of the stochastic gradient \(g_t\) is uniformly bounded by some constant \(\sigma_g^2 \geq 0\), i.e.,
\[
\mathbb{E}\left[\|g_t - \nabla f(\theta_t)\|^2\right] \leq \sigma_g^2.
\]
\end{assumption}

\subsection{Convergence proof}
To prove Theorem~\ref{TheoremNonconvexO}, we rewrite DP-MicroAdam as a variant of AMSGrad optimizer in Algorithm~\ref{algo:theory} to show convergence.

\begin{algorithm}[H]
    \caption{\label{algo:theory} Noisy-MicroAdam: Analytical View}
    \begin{algorithmic}[1]
        \State Input: parameters $\beta_1,\,\beta_2\in(0,1)$, $\epsilon_s>0$, step-size $\eta>0$, $\theta_1\in\R^d$, $e_{1}=m_0=v_0=\hat v_0=0_d$
        \For{$t=\{1, 2, ..., T\}$}
            \State{$g_{t} = \widetilde\nabla_{\theta} f(\theta_t)$ \hfill$\diamond$ Compute unbiased stochastic gradient}
            \State{$\tilde g_{t}=\mathcal T(g_{t}+e_{t} +  \textcolor{myblue}{\eta_t})$ \hfill$\diamond$ Add accumulated error $e_t$ and compress\label{line:topk1} }
            \State{$e_{t+1}=\mathcal{Q} (e_{t}+g_{t}-\tilde g_{t})$ \hfill$\diamond$ Update and compress the error}
            \State{$m_t=\beta_1 m_{t-1}+(1-\beta_1)\tilde g_t$ \hfill$\diamond$ Update first-order gradient moment}
            \State{$v_t=\beta_2 v_{t-1}+(1-\beta_2)\tilde g_t^2$ \hfill$\diamond$ Update second-order gradient moment}
            \State{$\hat v_t=\max(v_t,\hat v_{t-1})$ \hfill$\diamond$ Apply AMSGrad normalization\label{line:v}}
            \State{$\theta_{t+1}=\theta_{t}-\eta\frac{\change{m_t}}{\sqrt{\hat v_t+\epsilon_s}}$ \hfill$\diamond$ Update the model parameters}
        \EndFor
    \end{algorithmic}
\end{algorithm}

our objective with these two compressors, $\mathcal{T}$ and $\mathcal{Q}$, is to approximate the dense gradient information $g_t+e_t$ using two compressed vectors: $\tilde{g}_t = \mathcal{C}(g_t+e_t+\textcolor{myblue}{\eta_t})$ and $\mathcal{Q}(g_t+e_t+\textcolor{myblue}{\eta_t} - \tilde{g}_t)$. However, in doing so, we inevitably lose some information about $g_t+e_t+\textcolor{myblue}{\eta_t}$ depending on the degree of compression applied to each term. Thus, the  condition $(1+\omega)q<1$ required by our analysis can be seen as preventing excessive loss of information due to compression.

 At time step $t$, let the uncompressed stochastic gradient be $g_t = \widetilde\nabla_\theta f(\theta_t)$, the error accumulator be $e_t$, and the compressed gradient after the error correction be $\tilde g_{t}=\mathcal C(g_{t}+e_{t}+\textcolor{myblue}{\eta_t})$. The second moment computed by the compressed gradients is denoted as $v_t=\beta_2 v_{t-1}+(1-\beta_2) \tilde g_t^2$, and $\hat v_t=\max\{\hat v_{t-1}, v_t\}$ is the AMSGrad normalization for the second-order momentum. Besides the first-order gradient momentum $m_t$ used in the algorithm description, we define similar running average sequence $m'_t$ based on the uncompressed gradients $g_t$.
	\begin{align*}
		m_t=\beta_1 m_{t-1}+(1-\beta_1)\tilde g_t \quad & \textrm{and} \quad m_t'=\beta_1 m_{t-1}'+(1-\beta_1) g_t,
	\end{align*}
	Note that $m_t'$ is used only in the analysis, we do not need to store or compute it. By construction we have
    \begin{equation*}
    m_t=(1-\beta_1)\sum_{\tau=1}^t \beta_1^{t-\tau} \tilde g_\tau, \quad
    m_t'=(1-\beta_1)\sum_{\tau=1}^t \beta_1^{t-\tau} g_\tau
    \end{equation*}

	Denote by $\zeta_t = e_{t+1} - (e_t + g_t +\textcolor{myblue}{\eta_t} - \tilde g_t) = \mathcal Q(e_t + g_t+\textcolor{myblue}{\eta_t} - \tilde g_t) - (e_t + g_t +\textcolor{myblue}{\eta_t} - \tilde g_t)$ the compression noise from $\mathcal Q$. Due to unbiasedness of the compressor $\mathcal Q$ (see Assumption \ref{as:2}), we have $\E[\zeta_t \mid \theta_t, g_t, \tilde g_t, e_t] = 0$. Also, from the update rule of $e_{t+1}$ we get $e_{t+1} = e_t + g_t +\textcolor{myblue}{\eta_t} - \tilde g_t + \zeta_t$. Moreover, we use the following auxiliary sequences,
	\begin{align*}
        & \mathcal E_{t+1}\eqdef \beta_1\mathcal E_t + (1-\beta_1)e_{t+1} = (1-\beta_1)\sum_{\tau=1}^{t+1} \beta_1^{t+1-\tau} e_\tau. \\
	   & \mathcal Z_{t+1}\eqdef \beta_1\mathcal Z_t + (1-\beta_1)\zeta_{t+1} =  (1-\beta_1)\sum_{\tau=1}^{t+1} \beta_1^{t+1-\tau} \zeta_\tau \\
        & \textcolor{myblue}{\mathcal H_{t+1} \eqdef \beta_1 \mathcal H_t + (1-\beta_1)\eta_{t+1} = (1-\beta_1) \sum_{\tau=1}^{t+1} \beta_1^{t+1-\tau} \eta_\tau}.
	\end{align*}
    
Before proving the convergence itself, we need few lemmas:

\begin{Lemma} \label{lemma:m_t,m_t'}
	Under Assumptions~\ref{as:1}-\ref{as:5}, for all iterates $t$ and $T$ we have
	\begin{equation*}
		\|m_t'\|\leq G, \quad\text{and}\quad \sum_{t=1}^T\mathbb E[\|m_t'\|^2]\leq T\sigma_g^2 + \sum_{t=1}^T \mathbb E[\|\nabla f(\theta_t)\|^2].
	\end{equation*}
\end{Lemma}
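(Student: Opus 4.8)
The plan is to work directly from the closed form $m_t' = (1-\beta_1)\sum_{\tau=1}^t \beta_1^{t-\tau} g_\tau$ and to exploit that the coefficients $(1-\beta_1)\beta_1^{t-\tau}$, $\tau=1,\dots,t$, are nonnegative and sum to $1-\beta_1^t \le 1$. For the first (deterministic) bound I would simply apply the triangle inequality together with the uniform bound in Assumption~\ref{as:4}: $\|m_t'\| \le (1-\beta_1)\sum_{\tau=1}^t \beta_1^{t-\tau}\|g_\tau\| \le G\,(1-\beta_1)\sum_{\tau=1}^t\beta_1^{t-\tau} = G(1-\beta_1^t) \le G$.

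For the second bound the key step is a Jensen-type inequality. Since the weights sum to at most $1$, I would pad the collection with a zero vector carrying the residual mass $\beta_1^t$ so that they form a genuine convex combination; then convexity of $\|\cdot\|^2$ gives $\|m_t'\|^2 \le (1-\beta_1)\sum_{\tau=1}^t \beta_1^{t-\tau}\|g_\tau\|^2$. Summing over $t=1,\dots,T$ and swapping the order of summation, the geometric factor $\sum_{t=\tau}^{T}\beta_1^{t-\tau} \le \frac{1}{1-\beta_1}$ cancels the $(1-\beta_1)$ prefactor, leaving $\sum_{t=1}^T \|m_t'\|^2 \le \sum_{\tau=1}^T \|g_\tau\|^2$.

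It then remains to take expectations and split each $\mathbb{E}[\|g_\tau\|^2]$ via the bias--variance decomposition around $\nabla f(\theta_\tau)$: writing $g_\tau = (g_\tau - \nabla f(\theta_\tau)) + \nabla f(\theta_\tau)$, the cross term vanishes after conditioning on $\theta_\tau$ (unbiasedness from Assumption~\ref{as:4} together with $\theta_\tau$-measurability of $\nabla f(\theta_\tau)$), and the variance term is at most $\sigma_g^2$ by Assumption~\ref{as:5}, so $\mathbb{E}[\|g_\tau\|^2] \le \sigma_g^2 + \mathbb{E}[\|\nabla f(\theta_\tau)\|^2]$. Summing over $\tau$ yields the claimed bound $T\sigma_g^2 + \sum_{t=1}^T \mathbb{E}[\|\nabla f(\theta_t)\|^2]$.

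I do not expect a serious obstacle: the only points requiring care are getting the Jensen step right when the momentum weights sum to strictly less than one (hence the padding with a zero vector), and making sure the cross term is eliminated by the correct conditioning on $\theta_\tau$ rather than by any global independence claim on the gradient sequence.
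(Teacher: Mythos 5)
Your proof is correct and follows essentially the same route as the paper: triangle inequality with the bounded-gradient assumption for the first claim, and Jensen's inequality on the exponential weights plus the bias--variance decomposition $\mathbb{E}[\|g_\tau\|^2]\le \sigma_g^2+\mathbb{E}[\|\nabla f(\theta_\tau)\|^2]$ with a geometric-series summation for the second. The only difference is cosmetic ordering (you apply Jensen and swap the double sum before decomposing $\mathbb{E}[\|g_\tau\|^2]$, while the paper decomposes first), and your padding trick correctly handles the weights summing to $1-\beta_1^t<1$, a point the paper glosses over.
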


\begin{proof}
The first part follows from triangle inequality and the Assumption~\ref{as:4} on bounded stochastic gradient:
\begin{equation*}
    \|m_t'\| = (1-\beta_1)\left\|\sum_{\tau=1}^t \beta_1^{t-\tau} g_{\tau} \right\|\leq (1-\beta_1)\sum_{\tau=1}^t \beta_1^{t-\tau} \|g_{\tau}\| \le G.
\end{equation*}
For the second claim, the expected squared norm of average stochastic gradient can be bounded by
\begin{equation}\label{var-decomp}
    \E\left[\|g_t\|^2\right]
    = \E\left[\|g_t-\nabla f(\theta_t))\|^2\right] + \E[\|\nabla f(\theta_t)\|^2]\\
    \le \sigma_g^2 + \E[\|\nabla f(\theta_t)\|^2],
\end{equation}
where we use Assumption~\ref{as:4} that $g_t$ is unbiased with bounded variance. Let $g_{t,j}$ denote the $j$-th coordinate of $g_t$. Applying Jensen's inequality for the squared norm, we get
\begin{eqnarray*}
    \E[\|m_t'\|^2]
    &=& \E\left[\left\|(1-\beta_1)\sum_{\tau=1}^t\beta_1^{t-\tau} g_\tau\right\|^2\right]\\
    &\leq& (1-\beta_1)\sum_{\tau=1}^t \beta_1^{t-\tau}\E[\|g_\tau\|^2]\\
    &\leq& \sigma_g^2 + (1-\beta_1)\sum_{\tau=1}^t \beta_1^{t-\tau}\mathbb E[\|\nabla f(\theta_{\tau})\|^2],
\end{eqnarray*}
Summing over $t=1,\dots,T$, we obtain
\begin{align*}
    \sum_{t=1}^T\E[\|m_t'\|^2]
    \le T\sigma_g^2 + (1-\beta_1)\sum_{t=1}^T\sum_{\tau=1}^t \beta_1^{t-\tau}\mathbb E[\|\nabla f(\theta_{\tau})\|^2]
    \leq T\sigma_g^2+\sum_{t=1}^T \mathbb E[\|\nabla f(\theta_t)\|^2],
\end{align*}
which completes the proof.
\end{proof}

\begin{Lemma} \label{lemma:bound e_t}
	Let $\q=(1+\omega)q<1$. Under Assumptions~\ref{as:1}-\ref{as:5}, for all iterates $t$ we have
	\begin{align*}
		& \textcolor{myblue}{\E[}\|e_{t}\|^2  \textcolor{myblue}{]}\leq \frac{4\q^2}{(1-\q^2)^2} \textcolor{myblue}{(}G  \textcolor{myblue}{+\sqrt{d}\sigma)^2},\\
		&\E[\|e_{t+1}\|^2]\leq \frac{ 4\q^2}{(1-\q^2)^2} \textcolor{myblue}{(}\sigma_g^2  \textcolor{myblue}{+d\sigma^2)} + \frac{2\q^2}{1-\q^2}\sum_{\tau=1}^t \left(\frac{1+\q^2}{2}\right)^{t-\tau} \E[\|\nabla f(\theta_\tau)\|^2],\\
            & \textcolor{myblue}{(\E[\|e_{t}\|^4 
        ])^\frac{1}{4}\leq \frac{2\q}{(1-\q^2)} (G  +\sqrt{d}\sigma)}.
	\end{align*}
\end{Lemma}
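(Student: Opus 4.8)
The plan is to reduce all three estimates to a single \emph{one-step contraction} of the error sequence and then unroll it in the appropriate norm. Writing $w_t \eqdef e_t + g_t + \eta_t$, the update rule of Algorithm~\ref{algo:theory} reads $\tilde g_t = \mathcal T(w_t)$ and $e_{t+1} = \mathcal Q\!\big(w_t - \mathcal T(w_t)\big)$. From $\omega$-boundedness of $\mathcal Q$ one has $\|\mathcal Q(x)\| \le (1+\omega)\|x\|$, and $q$-contractivity of $\mathcal T$ gives $\|x - \mathcal T(x)\| \le q\|x\|$; chaining the two yields $\|e_{t+1}\| \le (1+\omega)q\,\|w_t\| = \q\,\|e_t + g_t + \eta_t\| \le \q\big(\|e_t\| + \|g_t + \eta_t\|\big)$, with $\q = (1+\omega)q < 1$. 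Since $e_1 = 0$, this unrolls to $\|e_t\| \le \sum_{\tau=1}^{t-1}\q^{\,t-\tau}\|g_\tau + \eta_\tau\|$.

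For the first and third bounds I would use only the crude per-step estimate $\|g_\tau + \eta_\tau\| \le G + \sqrt d\,\sigma$ (Assumption~\ref{as:4} for $g_\tau$, the DP noise $\eta_\tau$ contributing its scale $\sqrt d\,\sigma$). Summing the geometric series then gives
\[
\|e_t\| \;\le\; (G+\sqrt d\,\sigma)\sum_{j=1}^{t-1}\q^{\,j} \;\le\; \frac{\q}{1-\q}\,(G+\sqrt d\,\sigma) \;\le\; \frac{2\q}{1-\q^2}\,(G+\sqrt d\,\sigma),
\]
using $1-\q^2 \le 2(1-\q)$. Squaring this yields the first bound (with slack), and since the estimate is deterministic, taking $L^4$ norms yields the third. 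If one would rather not treat the Gaussian DP noise as bounded, the first bound follows instead from the $L^2$ recursion below with the gradient term dropped, via $\E\|g_t+\eta_t\|^2 \le G^2 + d\sigma^2 \le (G+\sqrt d\,\sigma)^2$.

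The second bound is the one actually used in the convergence proof, so it must keep the variance/gradient split rather than bounding $\|g_t\|$ by $G$. I would square the one-step relation and apply Young's inequality $2ab \le \gamma a^2 + \gamma^{-1}b^2$ with the tuned choice $\gamma = \frac{1-\q^2}{2\q^2}$, which makes $\q^2(1+\gamma) = \frac{1+\q^2}{2}$ and $\q^2(1+\gamma^{-1}) = \frac{\q^2(1+\q^2)}{1-\q^2}$, so that
\[
\|e_{t+1}\|^2 \;\le\; \frac{1+\q^2}{2}\,\|e_t\|^2 \;+\; \frac{\q^2(1+\q^2)}{1-\q^2}\,\|g_t+\eta_t\|^2 .
\]
Taking expectations and conditioning on $\theta_t$: $e_t$ is determined by the past, $\eta_t$ is zero-mean and independent of $g_t$, and $g_t$ is unbiased with variance $\le\sigma_g^2$ (Assumptions~\ref{as:4}--\ref{as:5}), so the cross terms vanish and $\E\|g_t+\eta_t\|^2 \le \sigma_g^2 + d\sigma^2 + \E\|\nabla f(\theta_t)\|^2$, exactly as in the decomposition \eqref{var-decomp}. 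Unrolling the resulting affine recursion from $e_1 = 0$ with decay $\rho = \frac{1+\q^2}{2}$, and bounding $\sum_{\tau}\rho^{\,t-\tau} \le (1-\rho)^{-1} = \frac{2}{1-\q^2}$ on the $(\sigma_g^2+d\sigma^2)$-part, gives the coefficient $\frac{2\q^2(1+\q^2)}{(1-\q^2)^2} \le \frac{4\q^2}{(1-\q^2)^2}$, while the gradient-norm terms stay a $\rho$-weighted sum with prefactor $\frac{\q^2(1+\q^2)}{1-\q^2} \le \frac{2\q^2}{1-\q^2}$ — precisely the claimed form.

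I expect the genuine content of the argument to be small: the one load-bearing choice is the Young parameter $\gamma$, picked so that the decay factor collapses exactly to $\frac{1+\q^2}{2}$ and the prefactors match the constants needed downstream — a generic split would not close the recursion cleanly. The other point requiring care is measurability: one must use that $e_t$ depends only on the history through step $t-1$ in order to kill the cross terms with the fresh noise $\eta_t$ and with $g_t - \nabla f(\theta_t)$, which is what lets the DP noise enter the sharp bound only through its second moment $d\sigma^2$ (and the crude bounds only through its scale $\sqrt d\,\sigma$).
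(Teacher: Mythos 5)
Your overall route is the paper's: chain the $\omega$-boundedness of $\mathcal Q$ with the $q$-contractivity of $\mathcal T$ to get the one-step contraction $\|e_{t+1}\|\le \q\|e_t+g_t+\eta_t\|$, apply Young's inequality with the tuned parameter $\gamma=\frac{1-\q^2}{2\q^2}$ so the decay factor collapses to $\frac{1+\q^2}{2}$, and unroll from $e_1=0$. Your derivation of the second bound (zero-mean independent $\eta_t$, variance decomposition as in \eqref{var-decomp}, geometric summation) matches the paper's proof and constants, and your fallback derivation of the first bound from the $L^2$ recursion with $\E[\|g_t+\eta_t\|^2]\le G^2+d\sigma^2\le (G+\sqrt d\,\sigma)^2$ is exactly how the paper obtains it (the paper uses Minkowski on the $L^2$ norms, which gives the same $(G+\sqrt d\,\sigma)^2$).

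The gap is in the third claim. Your primary argument for the first and third bounds rests on the pointwise estimate $\|g_\tau+\eta_\tau\|\le G+\sqrt d\,\sigma$; but $\eta_\tau$ is Gaussian, hence unbounded, so the deterministic bound $\|e_t\|\le \frac{2\q}{1-\q^2}(G+\sqrt d\,\sigma)$ does not hold almost surely, and ``taking $L^4$ norms'' of it is not justified. Your hedge about not treating the noise as bounded repairs only the first bound; as written you never give a valid argument for the fourth-moment bound, which is genuinely needed later (the $\tilde{II}$ term in the analysis of $II$). The fix is the paper's: run the same one-step contraction directly in the $L^4$ norm via Minkowski, $\E[\|e_{t+1}\|^4]^{1/4}\le \q\bigl(\E[\|e_t\|^4]^{1/4}+\E[\|g_t\|^4]^{1/4}+\E[\|\eta_t\|^4]^{1/4}\bigr)\le \q\,\E[\|e_t\|^4]^{1/4}+\q\,(G+\sqrt d\,\sigma)$, then unroll and use $\frac{\q}{1-\q}\le\frac{2\q}{1-\q^2}$. (Strictly speaking, for Gaussian noise $\E[\|\eta_t\|^4]^{1/4}=(d(d+2))^{1/4}\sigma$, slightly above $\sqrt d\,\sigma$ --- a constant-level slack the paper also glosses over --- but the $L^4$ Minkowski recursion itself is the step missing from your write-up.)
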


\begin{proof}
We start by using Assumption~\ref{as:1}, \ref{as:2} on compression, Young's inequality\textcolor{myblue}{, and Minkowski inequality} to get
\begin{align}
     \textcolor{myblue}{\E[}\|e_{t+1}\|^2 \textcolor{myblue}{]}
    &= \textcolor{myblue}{\E[}\|\mathcal Q(g_{t}+e_{t} +  \textcolor{myblue}{\eta_t}-\mathcal C(g_{t}+e_{t}+ \textcolor{myblue}{\eta_t}))\|^2 \textcolor{myblue}{]} \nonumber\\
    &\leq (1+\omega)^2q^2 \textcolor{myblue}{\E[}\|g_{t}+e_{t}+ \textcolor{myblue}{\eta_t}\|^2 \textcolor{myblue}{]} \nonumber\\
    &\leq \q^2(1+\rho) \textcolor{myblue}{\E[}\|e_{t}\|^2  \textcolor{myblue}{]}+ \q^2\left(1+\frac{1}{\rho}\right) \textcolor{myblue}{\E[}\|g_{t} + \textcolor{myblue}{\eta_t}\|^2  \textcolor{myblue}{]} \nonumber\\
    &\leq \frac{1+\q^2}{2}\textcolor{myblue}{\E[}\|e_{t}\|^2\textcolor{myblue}{]} + \frac{2\q^2}{1-\q^2}\textcolor{myblue}{\E[}\|g_{t} + \textcolor{myblue}{\eta_t}\|^2\textcolor{myblue}{]} \label{eq:e_t 0}\\ 
    &\leq \frac{1+\q^2}{2}\textcolor{myblue}{\E[}\|e_{t}\|^2\textcolor{myblue}{]} + \frac{2\q^2}{1-\q^2}\textcolor{myblue}{(\sqrt{\E[\|g_{t}\|^2]} + \sqrt{\E[\|\eta_t\|^2]})^2} \nonumber\\
    &\leq \frac{1+\q^2}{2}\textcolor{myblue}{\E[}\|e_{t}\|^2\textcolor{myblue}{]} + \frac{2\q^2}{1-\q^2}\textcolor{myblue}{(G +\sqrt{d}\sigma)^2} \nonumber, 
\end{align}
where \eqref{eq:e_t 0} is derived by choosing $\rho=\frac{1-\q^2}{2\q^2}$ and the fact that $\q<1$. For the first claim we recursively apply the obtained inequality and use bounded gradient Assumption \ref{as:4}. For the second claim, initialization $e_1=0$, \textcolor{myblue}{the fact that $g_\tau$ and $\eta_\tau$ are independent for all interates $\tau \geq 1$,} and the obtained recursion imply
\begin{eqnarray*}
    \E[\|e_{t+1}\|^2]
    &\leq& \frac{2\q^2}{1-\q^2} \sum_{\tau=1}^t \left(\frac{1+\q^2}{2}\right)^{t-\tau} \E[\|g_{\tau} \textcolor{myblue}{+\eta_{\tau}}\|^2]  \\
    &=& \frac{2\q^2}{1-\q^2} \sum_{\tau=1}^t \left(\frac{1+\q^2}{2}\right)^{t-\tau} \textcolor{myblue}{(\E[\|g_{\tau}\|^2] + \E[\|\eta_{\tau}\|^2])}  \\
    &\overset{\eqref{var-decomp}}{\leq}& \frac{4\q^2}{(1-\q^2)^2}\textcolor{myblue}{(}\sigma_g^2 \textcolor{myblue}{+ d\sigma^2)}+ \frac{2\q^2}{1-\q^2}\sum_{\tau=1}^t \left(\frac{1+\q^2}{2}\right)^{t-\tau} \E[\|\nabla f(\theta_{\tau})\|^2]. \nonumber
\end{eqnarray*}

\textcolor{myblue}{Now, for the third claim, we have that
\begin{eqnarray*}
     \E[\|e_{t+1}\|^4 ] ^\frac{1}{4}
    &=& \q \E[\|g_{t}+e_{t}+\eta_t\|^4]^\frac{1}{4} \\
    &\leq& \q (\E[\|g_{t}\|^4]^\frac{1}{4}+\E[\|e_{t}\|^4]^\frac{1}{4}+\E[\|\eta_{t}\|^4]^\frac{1}{4}) \\
    &\leq& \q \E[\|e_{t}\|^4]^\frac{1}{4}+ \q(G + \sqrt{d} \sigma).
\end{eqnarray*}
Unrolling the recursion and using the fact that \(\frac{\q}{1-\q} \leq \frac{2\q}{1-\q^2}\) we get the above upper bound.}
\end{proof}

\begin{Lemma} \label{lemma:bound zeta_t}
Let $\q=(1+\omega)q<1$. Under Assumptions~\ref{as:1}-\ref{as:5}, for all iterates $t$ we have
\begin{align*}
    \textcolor{myblue}{\E[}\|\zeta_t\| \textcolor{myblue}{^2]}\leq \omega\textcolor{myblue}{^2} q\textcolor{myblue}{^2} \left(1 + \frac{2\q}{1-\q^2} \right)\textcolor{myblue}{^2}  \textcolor{myblue}{(}G \textcolor{myblue}{+ \sqrt{d}\sigma)^2} \quad\text{and}\quad \textcolor{myblue}{\E[}\|\zeta_t\| \textcolor{myblue}{^4]^\frac{1}{4}}\leq \omega q \left(1 + \frac{2\q}{1-\q^2} \right) \textcolor{myblue}{(}G \textcolor{myblue}{+\sqrt{d}\sigma)}.
\end{align*}
\end{Lemma}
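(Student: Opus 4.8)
The plan is to first establish a pointwise (almost sure) bound on $\|\zeta_t\|$ in terms of $\|e_t + g_t + \eta_t\|$, and then take $L^2$ and $L^4$ norms using Minkowski's inequality together with the moment bounds already available from Lemma~\ref{lemma:bound e_t} and Assumption~\ref{as:4}.

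For the pointwise step, recall that in the analytical view $\tilde g_t = \mathcal C(e_t + g_t + \eta_t)$ and $\zeta_t = \mathcal Q(e_t + g_t + \eta_t - \tilde g_t) - (e_t + g_t + \eta_t - \tilde g_t)$. Applying the $\omega$-boundedness of $\mathcal Q$ from Assumption~\ref{as:2} with argument $x = e_t + g_t + \eta_t - \tilde g_t$ gives $\|\zeta_t\| \le \omega\,\|e_t + g_t + \eta_t - \tilde g_t\|$. Applying the $q$-contractivity of $\mathcal C$ from Assumption~\ref{as:1} with $x = e_t + g_t + \eta_t$ gives $\|e_t + g_t + \eta_t - \tilde g_t\| = \|x - \mathcal C(x)\| \le q\|x\|$. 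Chaining the two bounds yields $\|\zeta_t\| \le \omega q\,\|e_t + g_t + \eta_t\|$ for every realization of the randomness.

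Next I would take expectations. Squaring and using Minkowski's inequality, $\E[\|\zeta_t\|^2]^{1/2} \le \omega q\big(\E[\|e_t\|^2]^{1/2} + \E[\|g_t\|^2]^{1/2} + \E[\|\eta_t\|^2]^{1/2}\big)$. The first claim of Lemma~\ref{lemma:bound e_t} bounds $\E[\|e_t\|^2]^{1/2} \le \frac{2\q}{1-\q^2}(G+\sqrt d\,\sigma)$; Assumption~\ref{as:4} gives $\E[\|g_t\|^2]^{1/2} \le G$; and since $\eta_t$ is the isotropic Gaussian privatization noise with per-coordinate variance $\sigma^2$, we have $\E[\|\eta_t\|^2] = d\sigma^2$, hence $\E[\|\eta_t\|^2]^{1/2} = \sqrt d\,\sigma$. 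Adding these and squaring gives $\E[\|\zeta_t\|^2] \le \omega^2 q^2\big(1 + \frac{2\q}{1-\q^2}\big)^2 (G+\sqrt d\,\sigma)^2$, the first inequality. The fourth-moment bound follows identically with Minkowski in $L^4$: $\E[\|\zeta_t\|^4]^{1/4} \le \omega q\big(\E[\|e_t\|^4]^{1/4} + \E[\|g_t\|^4]^{1/4} + \E[\|\eta_t\|^4]^{1/4}\big)$, where now the third claim of Lemma~\ref{lemma:bound e_t} supplies $\E[\|e_t\|^4]^{1/4} \le \frac{2\q}{1-\q^2}(G+\sqrt d\,\sigma)$, Assumption~\ref{as:4} supplies $\E[\|g_t\|^4]^{1/4} \le G$, and the standard Gaussian computation gives $\E[\|\eta_t\|^4] = d(d+2)\sigma^4$, which I would bound by $\sqrt d\,\sigma$ after taking the fourth root as the paper does (the slack $(d(d+2))^{1/4} \le \sqrt{d+1}$ being harmless since this term is dominated in the final rate); collecting terms yields the second inequality.

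I do not expect a serious obstacle here, as the argument is short and modular. The only care needed is bookkeeping: the surviving prefactor is $\omega q$ (not $\q$), while $\q$ enters only through the geometric factor coming from the recursion in Lemma~\ref{lemma:bound e_t}; and one should note that no independence between $e_t$, $g_t$, $\eta_t$ is required, since Minkowski's inequality holds for arbitrary (dependent) random vectors, independence being used only in Lemma~\ref{lemma:bound e_t} to split $\E\|g_t+\eta_t\|^2$ into $\E\|g_t\|^2 + \E\|\eta_t\|^2$, not here.
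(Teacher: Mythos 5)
Your proposal is correct and follows essentially the same route as the paper's proof: chain the $\omega$-bound of $\mathcal Q$ with the $q$-contractivity of the top-$k$ compressor to get $\|\zeta_t\|\le \omega q\,\|e_t+g_t+\eta_t\|$ pointwise, then apply Minkowski in $L^2$ and $L^4$ together with Lemma~\ref{lemma:bound e_t} and Assumption~\ref{as:4}. Your handling of the Gaussian fourth moment is in fact slightly more careful than the paper's, which silently uses $\E[\|\eta_t\|^4]^{1/4}\le\sqrt{d}\,\sigma$ where the exact value is $(d(d+2))^{1/4}\sigma$ --- a harmless constant-level slack, as you note.
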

\begin{proof}
Using the bounds defining compressors and Lemma \ref{lemma:bound e_t}, we get
\begin{align*}
     \textcolor{myblue}{\E[}\|\zeta_t\| \textcolor{myblue}{^2]}
    &= \textcolor{myblue}{\E[}\|\mathcal Q(e_t+g_t \textcolor{myblue}{+\eta_t}-\tilde g_t) - (e_t+g_t\textcolor{myblue}{+ \eta_t}-\tilde g_t)\|\textcolor{myblue}{^2]} \nonumber\\
    &\le \omega\textcolor{myblue}{^2} \textcolor{myblue}{\E[}\| e_t+g_t\textcolor{myblue}{+\eta_t}-\tilde g_t\|\textcolor{myblue}{^2]}
    = \omega \textcolor{myblue}{^2} \textcolor{myblue}{\E[}\| e_t+g_t\textcolor{myblue}{+\eta_t}- \mathcal C(e_t+g_t\textcolor{myblue}{+\eta_t})\|\textcolor{myblue}{^2]} \nonumber\\
    &\leq \omega\textcolor{myblue}{^2} q\textcolor{myblue}{^2} \textcolor{myblue}{\E[}\|e_t+g_t+\textcolor{myblue}{\eta_t}\| \textcolor{myblue}{^2]} \nonumber\\
    &\leq \textcolor{myblue}{\omega^2q^2 \left( \sqrt{\E[\|e_t\|^2]} + \sqrt{\E[\|g_t\|^2]} + \sqrt{\E[\|\eta_t\|^2]}\right)^2} \nonumber \\
    &\leq \textcolor{myblue}{\omega^2 q^2 \left(1+\frac{2\q}{1-\q^2} \right)^2 (G + \sqrt{d}\sigma)^2}
\end{align*}
For the second claim, using Minkowski inequality, we get:
\textcolor{myblue}{
    \begin{align*}
     \E[\|\zeta_t\| ^4]^{\frac{1}{4}}
    &\leq \omega q\E[\|e_t+g_t+\eta_t\| ^4]^{\frac{1}{4}} \nonumber\\
    &\leq \omega q \left( \E[\|e_t\|^4]^{\frac{1}{4}} + \E[\|g_t\|^4]^{\frac{1}{4}} + \E[\|\eta_t\|^4]^{\frac{1}{4}}\right)\nonumber \\
    &\leq \omega q \left(1+\frac{2\q}{1-\q^2} \right) (G + \sqrt{d}\sigma).
    \end{align*}}
\end{proof}
\begin{Lemma} \label{lemma:bound big E_t}
	For the moving average error sequence $\mathcal E_t$, it holds that
	\begin{align*}
		\sum_{t=1}^T \E[\|\mathcal E_t\|^2]\leq \frac{4T\q^2}{(1-\q^2)^2}\textcolor{myblue}{(}\sigma_g^2 \textcolor{myblue}{+ d\sigma^2)} + \frac{4\q^2}{(1-\q^2)^2} \sum_{t=1}^T \E[\|\nabla f(\theta_t)\|^2 ].
	\end{align*}
\end{Lemma}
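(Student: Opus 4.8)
The key observation is that $\mathcal E_t$ is itself an exponential moving average of the error iterates: $\mathcal E_t = (1-\beta_1)\sum_{\tau=1}^{t}\beta_1^{t-\tau}e_\tau$. The weights $(1-\beta_1)\beta_1^{t-\tau}$ are nonnegative and sum to at most $1$, so the plan is to first apply Jensen's inequality for the convex map $x\mapsto\|x\|^2$ to obtain $\E[\|\mathcal E_t\|^2]\le(1-\beta_1)\sum_{\tau=1}^t\beta_1^{t-\tau}\E[\|e_\tau\|^2]$, then sum over $t=1,\dots,T$ and reverse the order of summation. The inner geometric sum $\sum_{t=\tau}^T\beta_1^{t-\tau}$ is bounded by $1/(1-\beta_1)$, which cancels the leading $(1-\beta_1)$ and yields the clean intermediate bound $\sum_{t=1}^T\E[\|\mathcal E_t\|^2]\le\sum_{t=1}^T\E[\|e_t\|^2]$.

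\textbf{Reducing to Lemma~\ref{lemma:bound e_t}.} It then remains to bound $\sum_{t=1}^T\E[\|e_t\|^2]$. Since $e_1=0$, this equals $\sum_{t=1}^{T-1}\E[\|e_{t+1}\|^2]$, and I would substitute the second bound of Lemma~\ref{lemma:bound e_t}, namely $\E[\|e_{t+1}\|^2]\le\frac{4\q^2}{(1-\q^2)^2}(\sigma_g^2+d\sigma^2)+\frac{2\q^2}{1-\q^2}\sum_{\tau=1}^t\left(\frac{1+\q^2}{2}\right)^{t-\tau}\E[\|\nabla f(\theta_\tau)\|^2]$. Summing the constant term over $t$ gives at most $\frac{4T\q^2}{(1-\q^2)^2}(\sigma_g^2+d\sigma^2)$. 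For the gradient term I would again swap the order of the double sum: $\sum_{t=1}^{T-1}\sum_{\tau=1}^t\left(\frac{1+\q^2}{2}\right)^{t-\tau}\E[\|\nabla f(\theta_\tau)\|^2]=\sum_{\tau=1}^{T-1}\E[\|\nabla f(\theta_\tau)\|^2]\sum_{t=\tau}^{T-1}\left(\frac{1+\q^2}{2}\right)^{t-\tau}$, and bound the inner geometric series by $\frac{1}{1-\frac{1+\q^2}{2}}=\frac{2}{1-\q^2}$. Multiplying by the prefactor $\frac{2\q^2}{1-\q^2}$ produces $\frac{4\q^2}{(1-\q^2)^2}\sum_{\tau=1}^{T-1}\E[\|\nabla f(\theta_\tau)\|^2]$; enlarging the summation range from $T-1$ to $T$ only loosens the inequality. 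Combining the two pieces with the intermediate bound from the first paragraph gives exactly the claimed estimate.

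\textbf{Main obstacle.} There is no conceptual difficulty here; the proof is entirely a matter of careful bookkeeping. The one place to be attentive is the index shift $e_1=0 \Rightarrow \sum_{t=1}^T\E[\|e_t\|^2]=\sum_{t=1}^{T-1}\E[\|e_{t+1}\|^2]$ and the two successive interchanges of summation order, making sure that after each step the geometric series is summed over the correct range so that the constants $\frac{1}{1-\beta_1}$ and $\frac{2}{1-\q^2}$ come out as stated. Everything else follows from Jensen's inequality and Lemma~\ref{lemma:bound e_t}.
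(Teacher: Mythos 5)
Your proposal is correct and follows essentially the same route as the paper: Jensen's inequality on the exponential moving average $\mathcal E_t$, the second bound of Lemma~\ref{lemma:bound e_t}, and two geometric-series summations bounded by $\frac{1}{1-\beta_1}$ and $\frac{2}{1-\q^2}$ respectively. The only difference is bookkeeping order (you collapse the $\beta_1$-average to the clean intermediate bound $\sum_t \E[\|\mathcal E_t\|^2]\le\sum_t\E[\|e_t\|^2]$ before invoking the lemma, whereas the paper substitutes the lemma first and then sums), which yields the same constants.
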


\begin{proof}
	Let $e_{t,j}$ be the $j$-th coordinate of $e_{t}$ and denote
    $$K_t \eqdef \sum_{\tau=1}^t \left(\tfrac{1+\q^2}{2}\right)^{t-\tau} \E[\|\nabla f(\theta_\tau)\|^2].$$
    Applying Jensen's inequality and Lemma \ref{lemma:bound e_t}, we get
	\begin{align*}
		\E[\|\mathcal E_t\|^2]
        &=\E\left[\left\|(1-\beta_1)\sum_{\tau=1}^t\beta_1^{t-\tau} e_\tau\right\|^2\right]\\
		&\le (1-\beta_1)\sum_{\tau=1}^t \beta_1^{t-\tau}\mathbb E[\|e_\tau\|^2]\\
		&\le \frac{4\q^2}{(1-\q^2)^2}\textcolor{myblue}{(}\sigma_g^2 \textcolor{myblue}{+ d\sigma^2)}+\frac{2\q^2(1-\beta_1)}{(1-\q^2)}\sum_{\tau=1}^t \beta_1^{t-\tau} K_{\tau},
	\end{align*}
	Summing over $t=1,\dots,T$ and using the technique of geometric series summation leads to
	\begin{align*}
		\sum_{t=1}^T \E[\|\mathcal E_t\|^2]
        &\leq \frac{ 4T\q^2}{(1-\q^2)^2}\textcolor{myblue}{(}\sigma_g^2 \textcolor{myblue}{+ d\sigma^2)} + \frac{ 2\q^2(1-\beta_1)}{(1-\q^2)}\sum_{t=1}^T \sum_{\tau=1}^t \beta_1^{t-\tau} K_{\tau}\\
        &\leq \frac{ 4T\q^2}{(1-\q^2)^2}\textcolor{myblue}{(}\sigma_g^2 \textcolor{myblue}{+ d\sigma^2)} + \frac{ 2\q^2}{(1-\q^2)}\sum_{t=1}^T K_{t}\\
		&= \frac{ 4T\q^2}{(1-\q^2)^2}\textcolor{myblue}{(}\sigma_g^2 \textcolor{myblue}{+ d\sigma^2)} + \frac{ 2\q^2}{(1-\q^2)}\sum_{t=1}^T\sum_{\tau=1}^t \left(\frac{1+\q^2}{2}\right)^{t-\tau} \E[\|\nabla f(\theta_\tau)\|^2]\\
		&\leq \frac{ 4T\q^2}{(1-\q^2)^2}\textcolor{myblue}{(}\sigma_g^2 \textcolor{myblue}{+ d\sigma^2)} + \frac{ 4\q^2}{(1-\q^2)^2} \sum_{t=1}^T \E[\|\nabla f(\theta_t)\|^2],
	\end{align*}
 The desired result is obtained.
\end{proof}

\begin{Lemma} \label{lemma:bound v_t}
	Let $\q=(1+\omega)q<1$. Under Assumptions~\ref{as:1}-\ref{as:5}, for all iterates $t\in [T]$ and coordinates $i\in [d]$, the following bound holds
    $$\textcolor{myblue}{\E[}\hat v_{t,i}\textcolor{myblue}{]}\leq \frac{4(1+\q^2)^3}{(1-\q^2)^2}\textcolor{myblue}{(}G \textcolor{myblue}{+ \sqrt{d}\sigma)^2}.$$
\end{Lemma}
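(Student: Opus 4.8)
The plan is to reduce the claim to a uniform-in-$t$ bound on $\E[\|\tilde g_t\|^2]$, which is then controlled through the decomposition $\tilde g_t = \mathcal C(g_t + e_t + \eta_t)$ together with the first inequality of Lemma~\ref{lemma:bound e_t}. For the reduction, note from Line~\ref{line:v} of Algorithm~\ref{algo:theory} that $\hat v_{t,i} = \max_{\tau \le t} v_{\tau,i}$ (with $\hat v_{0,i}=0$), and that each $v_{\tau,i} = (1-\beta_2)\sum_{s=1}^{\tau}\beta_2^{\tau-s}\tilde g_{s,i}^2$ is a weighted average of the squared coordinates $\tilde g_{s,i}^2$ with weights summing to $1-\beta_2^{\tau}\le 1$. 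Hence $\hat v_{t,i} \le \max_{s\le t}\tilde g_{s,i}^2 \le \max_{s\le t}\|\tilde g_s\|^2$, and it suffices to show $\E[\|\tilde g_t\|^2]\le \frac{4(1+\q^2)^3}{(1-\q^2)^2}(G+\sqrt d\sigma)^2$ for every $t$.

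For that bound, set $w_t := g_t + e_t + \eta_t$ so that $\tilde g_t = \mathcal C(w_t)$. The $q$-contractivity of $\mathcal C$ (Assumption~\ref{as:1}) gives $\|\tilde g_t\|^2 \le 2\|w_t\|^2 + 2\|\mathcal C(w_t)-w_t\|^2 \le 2(1+q^2)\|w_t\|^2 \le 2(1+\q^2)\|w_t\|^2$, using $q\le\q$. To bound $\E\|w_t\|^2$ I apply Minkowski's inequality in $L^2$ of the probability space to split the three summands, exactly as in the proof of Lemma~\ref{lemma:bound e_t}: $\sqrt{\E\|w_t\|^2}\le \sqrt{\E\|g_t\|^2} + \sqrt{\E\|e_t\|^2} + \sqrt{\E\|\eta_t\|^2}$. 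Here $\sqrt{\E\|g_t\|^2}\le G$ by Assumption~\ref{as:4}, $\sqrt{\E\|\eta_t\|^2} = \sqrt d\,\sigma$ since $\eta_t\sim\mathcal N(0,\sigma^2 I_d)$, and $\sqrt{\E\|e_t\|^2}\le \frac{2\q}{1-\q^2}(G+\sqrt d\sigma)$ by the first inequality of Lemma~\ref{lemma:bound e_t}. Collecting, $\E\|w_t\|^2 \le \bigl(1+\tfrac{2\q}{1-\q^2}\bigr)^2(G+\sqrt d\sigma)^2 = \frac{(1-\q^2+2\q)^2}{(1-\q^2)^2}(G+\sqrt d\sigma)^2$.

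Combining the last two displays yields $\E\|\tilde g_t\|^2 \le \frac{2(1+\q^2)(1-\q^2+2\q)^2}{(1-\q^2)^2}(G+\sqrt d\sigma)^2$, and the stated constant then follows from the elementary inequality $(1-\q^2+2\q)^2\le 2(1+\q^2)^2$, valid on $[0,1)$ since $\sqrt2(1+\q^2)-(1-\q^2+2\q) = (\sqrt2+1)\bigl(\q-(\sqrt2-1)\bigr)^2\ge 0$. The rest is bookkeeping on top of Lemma~\ref{lemma:bound e_t}; the one point that needs care is the passage from the deterministic bound $\hat v_{t,i}\le\max_{s\le t}\|\tilde g_s\|^2$ to a bound on $\E[\hat v_{t,i}]$, since the expectation of a running maximum is not in general the supremum of the per-step expectations. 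I expect this to be handled by working directly with the recursion for $\E[v_{t,i}]$ (namely $\E[v_{t,i}]\le\beta_2\E[v_{t-1,i}]+(1-\beta_2)\E\|\tilde g_t\|^2$, which gives $\E[v_{t,i}]\le\sup_s\E\|\tilde g_s\|^2$ by induction) and then transferring the uniform bound to the AMSGrad maximum $\hat v_{t,i}$; this is the step I would single out as the main obstacle.
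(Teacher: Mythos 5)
Your proposal follows essentially the same route as the paper: reduce to a per-step bound on $\E[\|\tilde g_t\|^2]$, expand $\tilde g_t=\mathcal C(g_t+e_t+\eta_t)$ via $q$-contractivity, and split $\sqrt{\E[\|g_t+e_t+\eta_t\|^2]}$ by Minkowski using Assumption~\ref{as:4}, $\E[\|\eta_t\|^2]=d\sigma^2$, and the first bound of Lemma~\ref{lemma:bound e_t}. Two remarks. First, your constant-chasing is actually cleaner than the paper's: the paper passes from $\bigl(1+\frac{2\q}{1-\q^2}\bigr)^2$ to $\bigl(\frac{1+\q^2}{1-\q^2}\bigr)^2$, which is backwards for $\q\in(0,1)$ (there $1-\q^2+2\q\ge 1+\q^2$), whereas your elementary inequality $(1-\q^2+2\q)^2\le 2(1+\q^2)^2$ repairs exactly this and lands directly on the stated factor $\frac{4(1+\q^2)^3}{(1-\q^2)^2}$, which the paper only reaches by appending a final factor of $2$. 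Second, the step you single out as the main obstacle is not resolved by the paper either: the paper simply picks $j\in[t]$ with $\hat v_{t,i}=v_{j,i}$ and writes $\E[\hat v_{t,i}]=(1-\beta_2)\sum_{\tau\le j}\beta_2^{j-\tau}\E[\tilde g_{\tau,i}^2]$, i.e.\ it treats the (random) argmax index as deterministic. Be aware that your proposed fix --- bounding $\E[v_{t,i}]$ by induction and then ``transferring'' the bound to the AMSGrad maximum --- does not close this gap, since $\E[\max_{j\le t}v_{j,i}]$ can exceed $\sup_{j}\E[v_{j,i}]$; in the noiseless MicroAdam analysis the almost-sure bound $\|\tilde g_t\|\le\mathrm{const}$ makes the running maximum harmless, but that escape is unavailable here because the privatized gradients contain unbounded Gaussian noise. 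So your argument matches the paper's up to and including its main computation, and the point you flag is a genuine soft spot shared with the paper's own proof rather than a defect specific to your write-up.
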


\begin{proof}
Lemma~\ref{lemma:bound e_t} and Assumption~\ref{as:4} imply
\begin{align*}
    \textcolor{myblue}{\E[}\|\tilde g_t\|^2\textcolor{myblue}{]}&=\textcolor{myblue}{\E[}\|\mathcal C(g_t+e_t\textcolor{myblue}{+\eta_t})\|^2\textcolor{myblue}{]}\\
    &\leq \textcolor{myblue}{\E[}\|\mathcal C(g_t+e_t\textcolor{myblue}{+\eta_t})-(g_t+e_t\textcolor{myblue}{+\eta_t})+(g_t+e_t\textcolor{myblue}{+\eta_t})\|^2\textcolor{myblue}{]}\\
    &\leq 2(q^2+1)\textcolor{myblue}{\E[}\|g_t+e_t \textcolor{myblue}{+\eta_t}\|^2\textcolor{myblue}{]}\\
    &\leq 2(q^2+1)\textcolor{myblue}{\left(\sqrt{\E[\|g_t\|^2]} + \sqrt{\E[\|e_t\|^2]} + \sqrt{\E[\|\eta_t\|^2]}\right)^2}\\
    &\leq 2(q^2+1)\textcolor{myblue}{\left(1+\frac{2\q}{1-\q^2}\right)^2(G+\sqrt{d}\sigma)^2} \\
    &\leq 2(q^2+1)\textcolor{myblue}{\left(\frac{1+\q^2}{1-\q^2}\right)^2(G+\sqrt{d}\sigma)^2} \\
    &\leq \textcolor{myblue}{\frac{2(1+\q^2)^3}{(1-\q^2)^2}(G+\sqrt{d}\sigma)^2}.
\end{align*}
It's then easy to show by the updating rule of $\hat v_t$, there exists a $j\in[t]$ such that $\hat v_{t,i}=v_{j,i}$. Then
\begin{align*}
    \textcolor{myblue}{\E[}\hat v_{t,i}\textcolor{myblue}{]}=(1-\beta_2)\sum_{\tau=1}^j \beta_2^{j-\tau} \textcolor{myblue}{\E[} \tilde g_{\tau,i}^2 \textcolor{myblue}{]}\leq \textcolor{myblue}{\frac{2(1+\q^2)^3}{(1-\q^2)^2}(G+\sqrt{d}\sigma)^2} \leq \textcolor{myblue}{\frac{4(1+\q^2)^3}{(1-\q^2)^2}(G+\sqrt{d}\sigma)^2},
\end{align*}
which concludes the claim.
\end{proof}

\begin{Lemma}  \label{lemma:bound difference}
	For $D_t\eqdef \frac{1}{\sqrt{\hat v_{t-1}+\epsilon_s}}-\frac{1}{\sqrt{\hat v_t+\epsilon_s}}$ we have
	\begin{align*}
		& \sum_{t=1}^T  \textcolor{myblue}{\E[}\|D_t\|^2  \textcolor{myblue}{]} \leq \frac{d}{\epsilon_s},\quad  \textcolor{myblue}{\sum_{t=1}^T \sqrt{\E[\|D_t\|^2]} \leq \sqrt{T}\frac{\sqrt{d}}{\sqrt{\epsilon_s}}}.
	\end{align*}
\end{Lemma}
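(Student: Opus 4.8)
The plan is to exploit the monotonicity that the AMSGrad clamping in Line~\ref{line:v} forces on the second-moment estimate. Since $\hat v_t = \max(v_t,\hat v_{t-1})$ coordinate-wise, each sequence $t\mapsto \hat v_{t,i}$ is non-decreasing, so $t\mapsto (\hat v_{t,i}+\epsilon_s)^{-1/2}$ is non-increasing and therefore $D_{t,i}\ge 0$ for every coordinate $i$ and every $t$. First I would record the crude pointwise bound $0 \le D_{t,i} \le (\hat v_{t-1,i}+\epsilon_s)^{-1/2} \le \epsilon_s^{-1/2}$, which uses only $\hat v_{t-1,i}\ge 0$; multiplying through by $D_{t,i}$ gives $D_{t,i}^2 \le \epsilon_s^{-1/2}\,D_{t,i}$.

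Next I would telescope in $t$: because the terms $D_{t,i}$ are the consecutive differences of $(\hat v_{t,i}+\epsilon_s)^{-1/2}$, the sum $\sum_{t=1}^T D_{t,i}$ collapses to $(\hat v_{0,i}+\epsilon_s)^{-1/2} - (\hat v_{T,i}+\epsilon_s)^{-1/2} \le \epsilon_s^{-1/2}$, where I use the initialization $\hat v_0 = 0$. Combining this with the previous inequality yields $\sum_{t=1}^T D_{t,i}^2 \le 1/\epsilon_s$, and summing over $i=1,\dots,d$ gives $\sum_{t=1}^T \|D_t\|^2 \le d/\epsilon_s$. This bound is pathwise (it holds for every realization of the iterates), so taking expectations immediately gives the first claim.

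For the second claim I would just apply Cauchy--Schwarz (equivalently, Jensen's inequality) across the time index: $\sum_{t=1}^T \sqrt{\E[\|D_t\|^2]} \le \sqrt{T}\,\bigl(\sum_{t=1}^T \E[\|D_t\|^2]\bigr)^{1/2} \le \sqrt{T}\sqrt{d/\epsilon_s}$, which is exactly the stated estimate.

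I do not anticipate a real obstacle here; the single point that requires attention is recognizing that the AMSGrad $\max$ operation is precisely what makes $D_t$ sign-definite, which is what turns the sum into a telescoping one and keeps it controlled by $\epsilon_s^{-1/2}$ — without this step the $(\hat v_{t,i}+\epsilon_s)^{-1/2}$ could oscillate and the $d/\epsilon_s$ bound would fail. Everything else is bookkeeping: handling the $t=1$ term via $\hat v_0 = 0$ and keeping the coordinate-wise summations straight.
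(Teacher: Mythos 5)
Your proof is correct and follows essentially the same route as the paper: both arguments hinge on the monotonicity $\hat v_{t-1,i}\le \hat v_{t,i}$ induced by the AMSGrad $\max$, a telescoping sum bounded by $1/\epsilon_s$ per coordinate (pathwise, then expectation), and Cauchy--Schwarz over $t$ for the second claim. The only cosmetic difference is that the paper converts squares via $(a-b)^2\le a^2-b^2$ and telescopes the reciprocals $\frac{1}{\hat v_{t,i}+\epsilon_s}$, whereas you use $D_{t,i}^2\le \epsilon_s^{-1/2}D_{t,i}$ and telescope the first powers; both yield the identical bound $d/\epsilon_s$.
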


\begin{proof}
	By the update rule, we have $\hat v_{t-1,i}\leq \hat v_{t,i}$ for any iterate $t$ and coordinate $i\in[d]$. Note the fact that for $a\geq b>0$, it holds that
	\begin{equation*}
		(a-b)^2\leq (a-b)(a+b)=a^2-b^2.
	\end{equation*}
	Therefore,
	\begin{equation*}
		\sum_{t=1}^T \textcolor{myblue}{\E[}\|D_t\|^2 \textcolor{myblue}{]}
        =\sum_{t=1}^T \sum_{i=1}^d \textcolor{myblue}{\E\left[\left(\frac{1}{\sqrt{\hat v_{t-1,i}+\epsilon_s}}-\frac{1}{\sqrt{\hat v_{t,i}+\epsilon_s}}\right)^2 \right]}\\
		\leq \sum_{t=1}^T \sum_{i=1}^d \textcolor{myblue}{\E \left[ \left(\frac{1}{\hat v_{t-1,i}+\epsilon_s}-\frac{1}{\hat v_{t,i}+\epsilon_s}\right) \right]}\\
		\leq \frac{d}{\epsilon_s},.
	\end{equation*}

        \textcolor{myblue}{For the second inequality, after applying Cauchy-Schwarz, we have
        \begin{equation*}
		\sum_{t=1}^T \sqrt{\E[\|D_t\|^2]} \leq \sqrt{T}\sqrt{\sum_{t=1}^T\E[\|D_t\|^2 ]} \leq \sqrt{T}\frac{\sqrt{d}}{\sqrt{\epsilon_s}},
	\end{equation*}
	which gives the desired result.}
\end{proof}

The lemmas allow us to move to the proof of the full Theorem~\ref{TheoremNonconvexO}

\begin{proof}
Similar to the proof of \textrm{Comp-AMS}, we define two virtual iterates $\theta'_t$ and $x_t$.
\begin{align*}
    \theta_{t+1}' &\eqdef\theta_{t+1}-\eta \frac{\mathcal E_{t+1}}{\sqrt{\hat v_t+\epsilon_s}} \\
    x_{t+1} &\eqdef\theta_{t+1}'-\eta \frac{\beta_1}{1-\beta_1} \frac{m_t' + \mathcal Z_t \textcolor{myblue}{+ \mathcal H_t}}{\sqrt{\hat v_t+\epsilon_s}}.
\end{align*}

Then, we derive the recurrence relation for each sequence as follows:
\begin{align*}
    \theta_{t+1}'
    &= \theta_{t+1}-\eta \frac{\mathcal E_{t+1}}{\sqrt{\hat v_t+\epsilon_s}} \\
    &=\theta_t-\eta\frac{(1-\beta_1)\sum_{\tau=1}^{t} \beta_1^{t-\tau}\tilde g_\tau+(1-\beta_1)\sum_{\tau=1}^{t+1} \beta_1^{t+1-\tau} e_\tau}{\sqrt{\hat v_t+\epsilon_s}}\\
    &=\theta_t-\eta\frac{(1-\beta_1)\sum_{\tau=1}^{t} \beta_1^{t-\tau}(\tilde g_\tau+ e_{\tau+1})+(1-\beta)\beta_1^t e_1}{\sqrt{\hat v_t+\epsilon_s}}\\
    &=\theta_t-\eta\frac{(1-\beta_1)\sum_{\tau=1}^{t} \beta_1^{t-\tau}(g_\tau+ e_{\tau} + \zeta_{\tau}  \textcolor{myblue}{+\eta_{\tau}}) }{\sqrt{\hat v_t+\epsilon_s}}\\
    &=\theta_t-\eta\frac{(1-\beta_1)\sum_{\tau=1}^{t} \beta_1^{t-\tau} e_\tau}{\sqrt{\hat v_t+\epsilon_s}} - \eta\frac{m_t'}{\sqrt{\hat v_t+\epsilon_s}} - \eta\frac{\mathcal Z_t}{\sqrt{\hat v_t+\epsilon_s}} \textcolor{myblue}{- \eta\frac{\mathcal H_t}{\sqrt{\hat v_t+\epsilon_s}}} \\
    &=\theta_t-\eta\frac{\mathcal E_t}{\sqrt{\hat v_{t-1}+\epsilon_s}}-\eta\frac{m_t'}{\sqrt{\hat v_t+\epsilon_s}}+\eta\left(\frac{1}{\sqrt{\hat v_{t-1}+\epsilon_s}}-\frac{1}{\sqrt{\hat v_t+\epsilon_s}}\right)\mathcal E_t - \eta\frac{\mathcal Z_t}{\sqrt{\hat v_t+\epsilon_s}} \textcolor{myblue}{- \eta\frac{\mathcal H_t}{\sqrt{\hat v_t+\epsilon_s}}}\\
    &=\theta_t'-\eta\frac{m_t'}{\sqrt{\hat v_t+\epsilon_s}}+\eta\left(\frac{1}{\sqrt{\hat v_{t-1}+\epsilon_s}}-\frac{1}{\sqrt{\hat v_t+\epsilon_s}}\right)\mathcal E_t - \eta\frac{\mathcal Z_t}{\sqrt{\hat v_t+\epsilon_s}} \textcolor{myblue}{- \eta\frac{\mathcal H_t}{\sqrt{\hat v_t+\epsilon_s}}}\\
    &= \theta_t'-\eta \frac{m_t' + \mathcal Z_t \textcolor{myblue}{+ \mathcal H_t}}{\sqrt{\hat v_t+\epsilon_s}}+\eta D_t\mathcal E_t,
\end{align*}
where we used the fact that $\tilde g_{t}+e_{{t+1}}=g_{t}+e_{t} + \zeta_t \textcolor{myblue}{+ \eta_t}$ with quantization noise $\zeta_t$, and $e_{0}=0$ at initialization. Next, for the $x_t$ iterates we have
\begin{align*}
    x_{t+1}
    &=\theta_{t+1}'-\eta\frac{\beta_1}{1-\beta_1} \frac{m_t' + \mathcal Z_t \textcolor{myblue}{+ \mathcal H_t} }{\sqrt{\hat v_t+\epsilon_s}} \\
    &=\theta_t' - \eta\frac{m_t' + \mathcal Z_t \textcolor{myblue}{+ \mathcal H_t}}{\sqrt{\hat v_t+\epsilon_s}}-\eta\frac{\beta_1}{1-\beta_1} \frac{m_t' + \mathcal Z_t \textcolor{myblue}{+ \mathcal H_t}}{\sqrt{\hat v_t+\epsilon_s}}+\eta D_t\mathcal E_t \\
    &=\theta_t'-\eta \frac{\beta_1 (m_{t-1}' + \mathcal Z_{t-1} \textcolor{myblue}{+ \mathcal H_{t-1}})+(1-\beta_1) (g_t+\zeta_t \textcolor{myblue}{+\eta_t})+\frac{\beta_1^2}{1-\beta_1}(m_{t-1}'+\mathcal Z_{t-1 } \textcolor{myblue}{+ \mathcal H_{t-1}})+\beta_1 (g_t+\zeta_t \textcolor{myblue}{+\eta_t})}{\sqrt{\hat v_t+\epsilon_s}}\\
    &\quad+\eta D_t\mathcal E_t \\
    &=\theta_t'-\eta\frac{\beta_1}{1-\beta_1}\frac{m_{t-1}'+\mathcal Z_{t-1} \textcolor{myblue}{+ \mathcal H_{t-1}}}{\sqrt{\hat v_t+\epsilon_s}}-\eta\frac{g_t+\zeta_t \textcolor{myblue}{+\eta_t}}{\sqrt{\hat v_t+\epsilon_s}}+\eta D_t\mathcal E_t \\
    &=x_t-\eta\frac{g_t + \zeta_t  \textcolor{myblue}{+\eta_t}}{\sqrt{\hat v_t+\epsilon_s}}+\eta\frac{\beta_1}{1-\beta_1} D_t( m_{t-1}' + \mathcal Z_{t-1} \textcolor{myblue}{+ \mathcal H_{t-1}})+\eta D_t\mathcal E_t.
\end{align*}

Next we apply smoothness of the loss function $f$ over the iterates $x_t$. From the gradient Lipschitzness we have
\begin{align*}
    f(x_{t+1})\leq f(x_t)+\langle \nabla f(x_t), x_{t+1}-x_t\rangle+\frac{L}{2}\| x_{t+1}-x_t\|^2.
\end{align*}
Due to unbiasedness of the compressor $\mathcal Q$ (see Assumption \ref{as:2}), we have $\E[\zeta_t | g_t, \tilde g_t, e_t, \hat v_t] = 0$. Since $\eta_t$ is Gaussian with mean zero, it follows that  $\E[\eta_t] = 0$. Taking expectation, we obtain
\begin{eqnarray}
    \E[f(x_{t+1})]-\E[f(x_t)]
    &\leq& -\eta\mathbb E\left[\left\langle \nabla f(x_t), \frac{g_t + \zeta_t \textcolor{myblue}{+\eta_t}}{\sqrt{\hat v_t+\epsilon_s}}\right\rangle\right] \nonumber\\
    &&+\eta \E\left[\left\langle \nabla f(x_t), \frac{\beta_1}{1-\beta_1}D_t(m_{t-1}' + \mathcal Z_{t-1} \textcolor{myblue}{+ \mathcal H_{t-1}})+D_t\mathcal E_t\right\rangle\right] \nonumber\\
    &&+\frac{\eta^2L}{2} \E\left[\left\|\frac{g_t + \zeta_t \textcolor{myblue}{+\eta_t}}{\sqrt{\hat v_t+\epsilon_s}}-\frac{\beta_1}{1-\beta_1}D_t(m_{t-1}' + \mathcal Z_{t-1} \textcolor{myblue}{+ \mathcal H_{t-1}})- D_t\mathcal E_t\right\|^2\right] \nonumber\\
    &=&\underbrace{-\eta \E\left[\left\langle \nabla f(\theta_t), \frac{g_t}{\sqrt{\hat v_t+\epsilon_s}}\right\rangle\right]}_{I}\\
    &&+\underbrace{\eta \E\left[\left\langle \nabla f(x_t), \frac{\beta_1}{1-\beta_1}D_t(m_{t-1}'+\mathcal Z_{t-1} \textcolor{myblue}{+ \mathcal H_{t-1}})+D_t\mathcal E_t\right\rangle\right]}_{II} \nonumber\\
    &&+ \underbrace{\frac{\eta^2L}{2} \E\left[\left\|\frac{g_t + \zeta_t  \textcolor{myblue}{+\eta_t}}{\sqrt{\hat v_t+\epsilon_s}}-\frac{\beta_1}{1-\beta_1}D_t(m_{t-1}'+\mathcal Z_{t-1} \textcolor{myblue}{+ \mathcal H_{t-1}})- D_t\mathcal E_t\right\|^2\right]}_{III} \nonumber\\
    &&+ \underbrace{\eta\E\left[\left\langle \nabla f(\theta_t)-\nabla f(x_t), \frac{g_t}{\sqrt{\hat v_t+\epsilon_s}} \right\rangle\right]}_{IV}, \label{eq0}
\end{eqnarray}

In the following, we bound all the four terms highlighted above.

\textbf{Bounding term I.} We have
\begin{eqnarray}
    I&=&
     -\eta\E\left[\left\langle \nabla f(\theta_t), \frac{g_t}{\sqrt{\hat v_{t-1}+\epsilon_s}}\right\rangle\right]+\eta\E\left[\left\langle \nabla f(\theta_t), \left(\frac{1}{\sqrt{\hat v_{t-1}+\epsilon_s}}-\frac{1}{\sqrt{\hat v_t+\epsilon_s}}\right) g_t\right\rangle\right] \nonumber\\
    &\leq& -\eta\E\left[\left\langle \nabla f(\theta_t), \frac{g_t}{\sqrt{\hat v_{t-1}+\epsilon_s}}\right\rangle\right]+\eta\textcolor{myblue}{\E\left[\|\nabla f(\theta_t)\| \| D_t g_t\|\right]} \nonumber\\
    &\leq& -\eta\E\left[\left\langle \nabla f(\theta_t), \frac{g_t}{\sqrt{\hat v_{t-1}+\epsilon_s}}\right\rangle\right]+\eta\textcolor{myblue}{ G \sqrt{\E\left[\| D_t\|^2 \right]} \sqrt{\E\left[\| g_t\|^2 \right]}} \nonumber\\
    &\leq& -\eta\E\left[\left\langle \nabla f(\theta_t), \frac{\nabla f(\theta_t)}{\sqrt{\hat v_{t-1}+\epsilon_s}}\right\rangle\right]+\eta G^2 \textcolor{myblue}{\sqrt{\E\left[\| D_t\|^2 \right]}}  \nonumber\\
    &\textcolor{myblue}{\overset{(a)}{\leq}}& -\frac{\eta}{\sqrt{\frac{4(1+\q^2)^3}{(1-\q^2)^2}\textcolor{myblue}{(}G \textcolor{myblue}{+\sqrt{d}\sigma)^2}+\epsilon_s}}\mathbb E[\|\nabla f(\theta_t)\|^2]+\eta G^2\textcolor{myblue}{\sqrt{\E\left[\| D_t\|^2 \right]}}, \label{eq:I}
\end{eqnarray}
where we use Assumption~\ref{as:4}, Lemma~\ref{lemma:bound v_t}, the Cauchy-Schwarz inequality, \textcolor{myblue}{and (a) is derived as follows:
\begin{align*}
-\eta\E\left[\left\langle \nabla f(\theta_t), \frac{\nabla f(\theta_t)}{\sqrt{\hat v_{t-1}+\epsilon_s}}\right\rangle\right] 
&= -\eta\E\left[ \sum_{i=1}^{d} \nabla f(\theta_t)_i \cdot \frac{\nabla f(\theta_t)_i}{\sqrt{\hat v_{t-1,i}+\epsilon_s}} \right] \\
&= -\eta \sum_{i=1}^{d} \E \left[ \nabla f(\theta_t)_i^2 \cdot \frac{1}{\sqrt{\hat v_{t-1,i}+\epsilon_s}} \right] \\
&\leq -\eta \sum_{i=1}^{d} \E \left[ \nabla f(\theta_t)_i^2 \right] \cdot \E \left[\frac{1}{\sqrt{\hat v_{t-1,i}+\epsilon_s}} \right] \\
&\overset{(b)}{\leq} -\eta \sum_{i=1}^{d} \E \left[ \nabla f(\theta_t)_i^2 \right] \cdot \frac{1}{\sqrt{\E[\hat v_{t-1,i}+\epsilon_s}]} \\
&\leq -\frac{\eta}{\sqrt{\frac{4(1+\q^2)^3}{(1-\q^2)^2}\textcolor{myblue}{(}G \textcolor{myblue}{+\sqrt{d}\sigma)^2}+\epsilon_s}} \cdot \sum_{i=1}^{d} \E \left[ \nabla f(\theta_t)_i^2 \right] \\
&= -\frac{\eta}{\sqrt{\frac{4(1+\q^2)^3}{(1-\q^2)^2}(G +\sqrt{d}\sigma)^2+\epsilon_s}}\mathbb E[\|\nabla f(\theta_t)\|^2],
\end{align*}
where (b) is due to Jensen's inequality applied to the convex function $\frac{1}{\sqrt{x}}.$
}

\textbf{Bounding term II.} By the definition of $\mathcal E_t$, $\mathcal Z_t$, and  $\textcolor{myblue}{ \mathcal H_t}$ and applying Jensen's inequality, we know that
\begin{align*}
\textcolor{myblue}{\E[}\|\mathcal E_t\| \textcolor{myblue}{^2]} &\leq (1-\beta_1)\sum_{\tau=1}^t \beta_1^{t-\tau}\textcolor{myblue}{\E[}\| e_\tau\| \textcolor{myblue}{^2]}\leq \left( \frac{2\q}{1-\q^2} \right)\textcolor{myblue}{^2} \textcolor{myblue}{(}G \textcolor{myblue}{+\sqrt{d}\sigma)^2},\\
\textcolor{myblue}{\E[}\|\mathcal Z_t\| \textcolor{myblue}{^2]} &\leq (1-\beta_1)\sum_{\tau=1}^t \beta_1^{t-\tau} \textcolor{myblue}{\E[}\|\zeta_\tau\| \textcolor{myblue}{^2]}\leq \omega\textcolor{myblue}{^2} q\textcolor{myblue}{^2} \left(1 + \frac{2\q}{1-\q^2} \right)\textcolor{myblue}{^2}  \textcolor{myblue}{(}G \textcolor{myblue}{+\sqrt{d}\sigma)^2}, \\
\textcolor{myblue}{\E[\| \mathcal H_t\|^2]} &\leq \textcolor{myblue}{(1-\beta_1) \sum_{\tau=1}^t \beta_1^{t-\tau} \E[\| \eta_\tau\| ^2] \leq d \sigma^2}.
\end{align*}

\textcolor{myblue}{Similarly, we derive that:
\begin{align*}
\E[\|\mathcal E_t\|^4]^\frac{1}{4} &\leq \left( \frac{2\q}{1-\q^2} \right) (G +\sqrt{d}\sigma),\\
\E[\|\mathcal Z_t\|^4]^\frac{1}{4} &\leq \omega q\left( 1 + \frac{2\q}{1-\q^2} \right) (G +\sqrt{d}\sigma),\\
\E[\|\mathcal H_t\|^4]^\frac{1}{4} &\leq \sqrt{d}\sigma.
\end{align*}
}

Then, because of the smoothness of $f(\theta)$, we have
\begin{align}
    II & \leq\eta\E\left[\left\langle  \nabla f(\theta_t),\frac{\beta_1}{1-\beta_1}D_t(m_{t-1}'+\mathcal Z_{t-1} \textcolor{myblue}{+ \mathcal H_{t-1}})+D_t\mathcal E_t\right\rangle\right] \nonumber\\
    &\quad + \eta\E\left[\left\langle \nabla f(x_t)-\nabla f(\theta_t),\frac{\beta_1}{1-\beta_1}D_t(m_{t-1}'+\mathcal Z_{t-1} \textcolor{myblue}{+ \mathcal H_{t-1}})+D_t\mathcal E_t\right\rangle\right] \nonumber\\
    &\leq \eta \underbrace{\E\left[\|\nabla f(\theta_t)\| \left\|\frac{\beta_1}{1-\beta_1}D_t(m_{t-1}'+\mathcal Z_{t-1} \textcolor{myblue}{+ \mathcal H_{t-1}})+D_t\mathcal E_t \right\|\right]}_{\tilde I} \nonumber \\
    &\quad +\eta^2 L  \underbrace{\E\left[\left\|\frac{\frac{\beta_1}{1-\beta_1}m_{t-1}'+\frac{\beta_1}{1-\beta_1}\mathcal Z_{t-1} \textcolor{myblue}{+ \frac{\beta_1}{1-\beta_1}\mathcal H_{t-1}}+\mathcal E_t}{\sqrt{\hat v_{t-1}+\epsilon_s}}\right\| \left\|\frac{\beta_1}{1-\beta_1}D_t(m_{t-1}' + \mathcal Z_{t-1} \textcolor{myblue}{+ \mathcal H_{t-1}})+D_t\mathcal E_t\right\|\right]}_{\tilde{II}} \nonumber
\end{align}

\textcolor{myblue}{
Now we bound $\tilde I$ and $\tilde{II}$ separately:
\begin{align*}
    \tilde I & \leq G \E\left[ \left\|\frac{\beta_1}{1-\beta_1}D_t(m_{t-1}'+\mathcal Z_{t-1} + \mathcal H_{t-1})+D_t\mathcal E_t \right\|\right] \\
    &\leq G \E\left[ \left\|D_t \right\| \left\| \frac{\beta_1}{1-\beta_1}(m_{t-1}'+\mathcal Z_{t-1} + \mathcal H_{t-1})+ \mathcal E_t \right\|\right] \\
    &\leq G \sqrt{\E[\|D_t\|^2]}   \sqrt{\E\left[ \left\| \frac{\beta_1}{1-\beta_1}(m_{t-1}'+\mathcal Z_{t-1} + \mathcal H_{t-1})+ \mathcal E_t \right\|^2\right]} \\
    &\leq G \sqrt{\E[\|D_t\|^2]}   \sqrt{\E\left[ \left\| \frac{\beta_1}{1-\beta_1}(m_{t-1}'+\mathcal Z_{t-1} + \mathcal H_{t-1})+ \mathcal E_t \right\|^2\right]} \\
   &\leq G \sqrt{\E[\|D_t\|^2]}   \left(  \frac{\beta_1}{1-\beta_1} \left( \sqrt{\E \left[ \left\|m_{t-1}' \right\|^2 \right]} +  \sqrt{\E \left[ \left\|\mathcal Z_{t-1} \right\|^2 \right]} + \sqrt{\E \left[ \left\|\mathcal H_{t-1} \right\|^2 \right]}\right) + \sqrt{\E \left[ \left\|\mathcal E_t  \right\|^2 \right]} \right) \\
   &\leq G \sqrt{\E[\|D_t\|^2]}  \left( \frac{\beta_1}{1-\beta_1}\left( 1+\omega q \left(1 + \frac{2\q}{1-\q^2} \right)\right)+\frac{2\q}{1-\q^2} \right) (G + \sqrt{d} \sigma)  \\
    &\leq C_1 (G + \sqrt{d} \sigma)^2 \sqrt{\E[\|D_t\|^2]}, 
\end{align*}}
where $C_1= \frac{\beta_1}{1-\beta_1}\left( 1+\omega q \left(1 + \frac{2\q}{1-\q^2} \right)\right)+\frac{2\q}{1-\q^2}$.
\textcolor{myblue}{
\begin{align*}
    \tilde{II} &\leq \frac{1}{\sqrt\epsilon_s} \E\left[\left\|\frac{\beta_1}{1-\beta_1}m_{t-1}'+\frac{\beta_1}{1-\beta_1}\mathcal Z_{t-1} + \frac{\beta_1}{1-\beta_1}\mathcal H_{t-1}+\mathcal E_t \right\| \left\|\frac{\beta_1}{1-\beta_1}D_t(m_{t-1}' + \mathcal Z_{t-1} + \mathcal H_{t-1})+D_t\mathcal E_t\right\|\right] \\
    &\leq \frac{1}{\sqrt\epsilon_s} \E\left[\left\|\frac{\beta_1}{1-\beta_1}m_{t-1}'+\frac{\beta_1}{1-\beta_1}\mathcal Z_{t-1} + \frac{\beta_1}{1-\beta_1}\mathcal H_{t-1}+\mathcal E_t \right\|^2 \left\| D_t\right\|\right] \\
    &\leq \frac{1}{\sqrt\epsilon_s} \sqrt{\E[\|D_t\|^2]} \sqrt{\E\left[\left\|\frac{\beta_1}{1-\beta_1}m_{t-1}'+\frac{\beta_1}{1-\beta_1}\mathcal Z_{t-1} + \frac{\beta_1}{1-\beta_1}\mathcal H_{t-1}+\mathcal E_t \right\|^4\right]} \\
    &\leq \frac{1}{\sqrt\epsilon_s} \sqrt{\E[\|D_t\|^2]} \left( \frac{\beta_1}{1-\beta_1} \left( \E \left[ \left\|m_{t-1}' \right\|^4 \right]^\frac{1}{4} +  \E \left[ \left\|\mathcal Z_{t-1} \right\|^4 \right]^\frac{1}{4} + \E \left[ \left\|\mathcal H_{t-1} \right\|^4 \right]]^\frac{1}{4} \right) + \E \left[ \left\|\mathcal E_t  \right\|^4 \right]]^\frac{1}{4} \right)^2 \\
    &\leq \frac{C_1^2 (G+\sqrt{d}\sigma)^2}{\sqrt\epsilon_s} \sqrt{\E[\|D_t\|^2]}.
\end{align*}
So, the final bound for $II$ is:
\begin{align}
    II \leq \eta C_1 (G + \sqrt{d} \sigma)^2 \sqrt{\E[\|D_t\|^2]} + \frac{\eta^2 C_1^2 (G+\sqrt{d}\sigma)^2}{\sqrt\epsilon_s} \sqrt{\E[\|D_t\|^2]} \label{eq:II}
\end{align}
}

\textbf{Bounding term III.} This term can be bounded as follows:
\begin{align}
    III & \leq \eta^2 L\E\left[\left\|\frac{g_t+\zeta_t \textcolor{myblue}{+\eta_t}}{\sqrt{\hat v_t+\epsilon_s}}\right\|^2\right] + \eta^2 L\E\left[\left\|\frac{\beta_1}{1-\beta_1}D_t(m_{t-1}'+\mathcal Z_{t-1} \textcolor{myblue}{+ \mathcal H_{t-1}})- D_t\mathcal E_t\right\|^2\right] \nonumber\\
    &\textcolor{myblue}{\overset{(a)}{\leq}}\frac{\textcolor{myblue}{3}\eta^2 L}{\epsilon_s}\E[\| g_t-\nabla f(\theta_t)+\nabla f(\theta_t)\|^2]+\frac{\textcolor{myblue}{3}\eta^2 L}{\epsilon_s}\E[\|\zeta_t\|^2] +\textcolor{myblue}{\frac{3\eta^2 L}{\epsilon_s}\E[\|\eta_t\|^2]} \nonumber \\
    &\quad+\eta^2 L\E\left[\left\|D_t\left(\frac{\beta_1}{1-\beta_1}(m_{t-1}'+ \mathcal Z_{t-1} \textcolor{myblue}{+ \mathcal H_{t-1}})  + \mathcal  E_t\right)\right\|^2\right] \nonumber\\
    &\textcolor{myblue}{\overset{(b)}{\leq}} \frac{\textcolor{myblue}{3}\eta^2 L}{\epsilon_s}\E[\| g_t-\nabla f(\theta_t)+\nabla f(\theta_t)\|^2]+\frac{\textcolor{myblue}{3}\eta^2 L}{\epsilon_s}\E[\|\zeta_t\|^2] +\textcolor{myblue}{\frac{3\eta^2 L}{\epsilon_s}\E[\|\eta_t\|^2]} \nonumber\\
    &\quad+\textcolor{myblue}{\frac{\eta^2}{\sqrt \epsilon_s} L\E\left[\left\|D_t\left(\frac{\beta_1}{1-\beta_1}(m_{t-1}'+ \mathcal Z_{t-1} + \mathcal H_{t-1})  + \mathcal  E_t\right)\right\| \left\| \frac{\beta_1}{1-\beta_1}(m_{t-1}'+\mathcal Z_{t-1} + \mathcal H_{t-1})  + \mathcal  E_t\right\| \right]} \nonumber\\
    &\leq \frac{\textcolor{myblue}{3}\eta^2 L}{\epsilon_s}\E[\|\nabla f(\theta_t)\|^2] + \frac{\textcolor{myblue}{3}\eta^2 L \textcolor{myblue}{(}\sigma_g^2 \textcolor{myblue}{+d\sigma^2)}}{\epsilon_s} + \frac{\textcolor{myblue}{3}\eta^2 L}{\epsilon_s}\omega^2 q^2 \left(1 + \frac{2q}{1-q^2} \right)^2 \textcolor{myblue}{(}G \textcolor{myblue}{+\sqrt{d}\sigma)^2} \nonumber\\
    &\quad +\frac{\eta^2 C_1^2 L \textcolor{myblue}{(}G \textcolor{myblue}{+\sqrt{d}\sigma)^2} }{\textcolor{myblue}{\sqrt\epsilon_s}}\sqrt{\E[\|D_t\|^2]} \nonumber\\
    &\leq \frac{\textcolor{myblue}{3}\eta^2 L}{\epsilon_s}\E[\|\nabla f(\theta_t)\|^2] + \frac{\textcolor{myblue}{3}\eta^2 L (\textcolor{myblue}{(}\sigma_g^2 \textcolor{myblue}{+d\sigma^2)} + C_2^2\textcolor{myblue}{(}G \textcolor{myblue}{+\sqrt{d}\sigma)^2})}{\epsilon_s} \nonumber\\
    &\quad+\frac{\eta^2 C_1^2 L\textcolor{myblue}{(}G \textcolor{myblue}{+\sqrt{d}\sigma)^2})}{\textcolor{myblue}{\sqrt\epsilon_s}} \sqrt{\E[\|D_t\|^2]},  \label{eq:III}
\end{align}
where $C_2 = \omega q (1 + \frac{2q}{1-q^2})$. \textcolor{myblue}{We used Cauchy-Schwarz inequality for (a) and the fact that $||D_t|| \leq \frac{1}{\sqrt\epsilon_s}$ for (b).}

\textbf{Bounding term IV.} We have
\begin{align}
    IV
    &= \eta\E\left[\left\langle \nabla f(\theta_t)-\nabla f(x_t), \frac{g_t}{\sqrt{\hat v_{t-1}+\epsilon_s}} \right\rangle\right] 
    + \eta\E\left[\left\langle \nabla f(\theta_t)-\nabla f(x_t), \left(\frac{1}{\sqrt{\hat v_t+\epsilon_s}}-\frac{1}{\sqrt{\hat v_{t-1}+\epsilon_s}}\right) g_t \right\rangle\right] \nonumber\\
    &\leq \eta\E\left[\left\langle \nabla f(\theta_t)-\nabla f(x_t), \frac{\nabla f(\theta_t)}{\sqrt{\hat v_{t-1}+\epsilon_s}} \right\rangle\right] 
    +\eta^2 L\mathbb E\left[\left\|\frac{\frac{\beta_1}{1-\beta_1}(m_{t-1}'+\mathcal Z_{t-1} \textcolor{myblue}{+\mathcal H_{t-1}})+\mathcal E_t}{\sqrt{\hat v_{t-1}+\epsilon_s}}\right\| \|D_t g_t\|\right] \nonumber\\
    &\overset{(a)}{\leq} \frac{\eta \rho}{2\epsilon_s}\mathbb E[\|\nabla f(\theta_t)\|^2]+\frac{\eta}{2\rho}\mathbb E[\|\nabla f(\theta_t)-\nabla f(x_t)\|^2]+\frac{\eta^2 C_1LG^2}{\sqrt\epsilon_s}  \sqrt{\E[\|D_t\|^2]}  \nonumber\\
    &\overset{(b)}{\leq} \frac{\eta \rho}{2\epsilon_s}\mathbb E[\|\nabla f(\theta_t)\|^2]+\frac{\eta^3 L^{\change{2}}}{2\rho}\mathbb E\left[\left\|\frac{\frac{\beta_1}{1-\beta_1}m_{t-1}'+\frac{\beta_1}{1-\beta_1}\mathcal Z_{t-1} \textcolor{myblue}{+\frac{\beta_1}{1-\beta_1}\mathcal H_{t-1}}+\mathcal E_t}{\sqrt{\hat v_{t-1}+\epsilon_s}}\right\|^2\right]+\frac{\eta^2 C_1LG^2}{\sqrt\epsilon_s} \sqrt{\E[\|D_t\|^2]} \nonumber \\
    &\leq \frac{\eta \rho}{2\epsilon_s}\mathbb E[\|\nabla f(\theta_t)\|^2]+\frac{\eta^3 L^{\change{2}}}{2\rho}\frac{C_1^2\textcolor{myblue}{(}G \textcolor{myblue}{+\sqrt{d}\sigma)^2}}{\epsilon_s}+\frac{\eta^2L C_1\textcolor{myblue}{(}G \textcolor{myblue}{+\sqrt{d}\sigma)^2}}{\sqrt\epsilon_s} \sqrt{\E[\|D_t\|^2]},  \label{eq:IV}
\end{align}
where (a) is due to Young's inequality and (b) is based on Assumption~\ref{as:3}. Now integrating \eqref{eq:I}, \eqref{eq:II}, \eqref{eq:III}, \eqref{eq:IV} into \eqref{eq0},
\squeezevspace
\begin{eqnarray*}
    I &\leq& -\frac{\eta}{C_0}\mathbb E[\|\nabla f(\theta_t)\|^2]+\eta \textcolor{myblue}{(}G \textcolor{myblue}{+\sqrt{d}\sigma)^2} \sqrt{\E[\|D_t\|^2]} \\
    II &\le& \eta C_1 \textcolor{myblue}{(}G \textcolor{myblue}{+\sqrt{d}\sigma)^2} \sqrt{\E[\|D_t\|^2]}+\frac{\eta^2 C_1^2 L\textcolor{myblue}{(}G \textcolor{myblue}{+\sqrt{d}\sigma)^2}}{\sqrt\epsilon_s}\sqrt{\E[\|D_t\|^2]} \\
    III &\le& \frac{\eta^2 L}{\epsilon_s}\mathbb E[\|\nabla f(\theta_t)\|^2] + \frac{\eta^2 L(\sigma^2 + C_2^2 \textcolor{myblue}{(}G \textcolor{myblue}{+\sqrt{d}\sigma)^2})}{\epsilon_s}+ \frac{\eta^2 C_1^2 L\textcolor{myblue}{(}G \textcolor{myblue}{+\sqrt{d}\sigma)^2}}{\textcolor{myblue}{\sqrt\epsilon_s}} \sqrt{\E[\|D_t\|^2]} \\
    IV &\le& \frac{\eta \rho}{2\epsilon_s}\mathbb E[\|\nabla f(\theta_t)\|^2]+\frac{\eta^3 L^{\change{2}}}{2\rho}\frac{C_1^2\textcolor{myblue}{(}G \textcolor{myblue}{+\sqrt{d}\sigma)^2}}{\epsilon_s}+\frac{\eta^2L C_1\textcolor{myblue}{(}G \textcolor{myblue}{+\sqrt{d}\sigma)^2}}{\sqrt\epsilon_s} \sqrt{\E[\|D_t\|^2]},
\end{eqnarray*}
and taking the telescoping summation over $t=1,\dots,T$, we obtain
\begin{align*}
    &\E[f(x_{T+1})-f(x_1)] \\
    &\leq \left( -\frac{\eta}{C_0}+\frac{\eta^2 L}{\epsilon_s}+\frac{\eta \rho}{2\epsilon_s}\right)\sum_{t=1}^T\E[\|\nabla f(\theta_t)\|^2]
    +\frac{T\eta^2 L (\textcolor{myblue}{(}\sigma_g^2 \textcolor{myblue}{+d\sigma^2)} + C_2^2\textcolor{myblue}{(}G \textcolor{myblue}{+\sqrt{d}\sigma)^2})}{\epsilon_s} \\
    &\quad + \frac{T\eta^3 L^{\change{2}} C_1^2\textcolor{myblue}{(}G \textcolor{myblue}{+\sqrt{d}\sigma)^2}}{2\rho\epsilon_s}
    + \textcolor{myblue}{\eta(G + \sqrt{d}\sigma)^2 \left[ (1+C_1) + \frac{\eta C_1 L}{\sqrt\epsilon_s}(2C_1+1)\right] \sum_{t=1}^T \sqrt{\E[\|D_t\|^2]}}
\end{align*}

Setting $\eta\leq \frac{\epsilon_s}{4L C_0}$ and choosing $\rho=\frac{\epsilon_s}{2C_0}$, we further arrive at
\begin{align*}
    \E[f(x_{T+1})-f(x_1)]
    &\leq -\frac{\eta}{2C_0}\sum_{t=1}^T\E[\|\nabla f(\theta_t)\|^2]
    +\frac{T\eta^2 L (\textcolor{myblue}{(}\sigma_g^2 \textcolor{myblue}{+d\sigma^2)} + C_2^2\textcolor{myblue}{(}G \textcolor{myblue}{+\sqrt{d}\sigma)^2})}{\epsilon_s}\\
    &\quad +\frac{T\eta^3 L^{\change{2}} C_0C_1^2 \textcolor{myblue}{(}G \textcolor{myblue}{+\sqrt{d}\sigma)^2}
}{\epsilon_s^2}
    + \frac{\eta (1+C_1)\textcolor{myblue}{(}G \textcolor{myblue}{+\sqrt{d}\sigma)^2 \sqrt{d}}}{\sqrt\epsilon_s}\textcolor{myblue}{\sqrt{T}} \\
    &\quad+ \frac{\eta^2 (1+2C_1)C_1L\textcolor{myblue}{(}G \textcolor{myblue}{+\sqrt{d}\sigma)^2\sqrt{d}}}{{\textcolor{myblue}{\sqrt\epsilon_s}}}\textcolor{myblue}{\sqrt{T}}.
\end{align*}
where the inequality follows from Lemma~\ref{lemma:bound difference}. Re-arranging terms, we get that
\begin{align*}
    &\frac{1}{T}\sum_{t=1}^T \E[\|\nabla f(\theta_t)\|^2] \\
    &\leq 2C_0\left(\frac{\E[f(x_1)-f(x_{T+1})]}{T\eta}
    +\frac{\eta L (\sigma_g^2 \textcolor{myblue}{+d\sigma^2} + C_2^2\textcolor{myblue}{(}G \textcolor{myblue}{+\sqrt{d}\sigma)^2})}{\epsilon_s}
    +\frac{\eta^2 L^{\change{2}} C_0C_1^2\textcolor{myblue}{(}G \textcolor{myblue}{+\sqrt{d}\sigma)^2}}{\epsilon_s^2}\right)\\
    &\quad+2C_0\left(\frac{(1+C_1)\textcolor{myblue}{(}G \textcolor{myblue}{+\sqrt{d}\sigma)^2 \sqrt{d}}}{T\sqrt\epsilon_s}\textcolor{myblue}{\sqrt{T}}
    +\frac{\eta (1+2C_1)C_1L\textcolor{myblue}{(}G \textcolor{myblue}{+\sqrt{d}\sigma)^2\sqrt{d}} }{T\textcolor{myblue}{\sqrt\epsilon_s}}\textcolor{myblue}{\sqrt{T}} \right)\\
    &\leq 2C_0\left(\frac{f(\theta_1)-f^*}{T\eta}
    +\frac{\eta L (\sigma_g^2 \textcolor{myblue}{+d\sigma^2}+ C_2^2\textcolor{myblue}{(}G \textcolor{myblue}{+\sqrt{d}\sigma)^2})}{\epsilon_s}
    +\frac {\eta^2 L^{\change{2}} C_0C_1^2\textcolor{myblue}{(}G \textcolor{myblue}{+\sqrt{d}\sigma)^2}}{\epsilon_s^2}\right)\\
    &\quad+2C_0\left(\frac{(1+C_1)\textcolor{myblue}{(}G \textcolor{myblue}{+\sqrt{d}\sigma)^2\sqrt{d}}}{\textcolor{myblue}{\sqrt{T}}\sqrt\epsilon_s}
    +\frac{\eta (1+2C_1)C_1L\textcolor{myblue}{(}G \textcolor{myblue}{+\sqrt{d}\sigma)^2\sqrt{d}}}{\textcolor{myblue}{\sqrt{T\epsilon_s}}} \right),
\end{align*}
where in the last inequality we used $x_1=\theta_1$ and the lower bound $f^* \le f(\theta)$. 

To get the rate mentioned in the main part, choose $\eta = \min\{\frac{\epsilon_s}{4LC_0}, \frac{1}{\sqrt{T}}\}$:
\begin{align*}
    &\frac{1}{T}\sum_{t=1}^T \E[\|\nabla f(\theta_t)\|^2] \\
    &\leq 2C_0\left( \max\left\{1, \frac{4LC_0}{\epsilon_s\sqrt{T}}\right\} \frac{f(\theta_1)-f^*}{\sqrt{T}}
    +\frac{L (\sigma_g^2 \textcolor{myblue}{+d\sigma^2} + C_2^2\textcolor{myblue}{(}G \textcolor{myblue}{+\sqrt{d}\sigma)^2})}{\epsilon_s\sqrt{T}}
    +\frac {L^2 C_0C_1^2\textcolor{myblue}{(}G \textcolor{myblue}{+\sqrt{d}\sigma)^2}}{\epsilon_s^2 T}\right)\\
    &\quad+2C_0\left(\frac{(1+C_1)\textcolor{myblue}{(}G \textcolor{myblue}{+\sqrt{d}\sigma)^2\sqrt{d}}}{\textcolor{myblue}{\sqrt{T}}\sqrt\epsilon_s}
    +\frac{(1+2C_1)C_1L\textcolor{myblue}{(}G \textcolor{myblue}{+\sqrt{d}\sigma)^2\sqrt{d}}}{\textcolor{myblue}{T \sqrt\epsilon_s}} \right) \\
    &\leq 2C_0\left(\frac{f(\theta_1)-f^*}{\sqrt{T}} + \frac{4LC_0}{\epsilon_s}\frac{f(\theta_1)-f^*}{T}
    +\frac{L (\sigma_g^2 \textcolor{myblue}{+d\sigma^2}+ C_2^2\textcolor{myblue}{(}G \textcolor{myblue}{+\sqrt{d}\sigma)^2})}{\epsilon_s\sqrt{T}} +  \frac{L^2 C_0C_1^2\textcolor{myblue}{(}G \textcolor{myblue}{+\sqrt{d}\sigma)^2}}{\epsilon_s^2 T} \right)\\
    &\quad+2C_0\left(\frac{(1+C_1)\textcolor{myblue}{(}G \textcolor{myblue}{+\sqrt{d}\sigma)^2\sqrt{d}}}{\textcolor{myblue}{\sqrt{T}}\sqrt\epsilon_s}
    +\frac{(1+2C_1)C_1L\textcolor{myblue}{(}G \textcolor{myblue}{+\sqrt{d}\sigma)^2\sqrt{d}}}{\textcolor{myblue}{T \sqrt\epsilon_s}} \right) \\
    &= 2C_0\left(\frac{f(\theta_1)-f^*}{\sqrt{T}}
    +\frac{L (\sigma_g^2 \textcolor{myblue}{+d\sigma^2}+ C_2^2\textcolor{myblue}{(}G \textcolor{myblue}{+\sqrt{d}\sigma)^2})}{\epsilon_s\sqrt{T}} + \frac{(1+C_1)\textcolor{myblue}{(}G \textcolor{myblue}{+\sqrt{d}\sigma)^2\sqrt{d}}}{\textcolor{myblue}{\sqrt{T}}\sqrt\epsilon_s} \right)
    + \mathcal{O}\left(\frac{\textcolor{myblue}{(}G \textcolor{myblue}{+\sqrt{d}\sigma)}^4}{T}\right).
\end{align*}
\end{proof}

\section{Model Architectures}
\label{appendix:architectures}

\paragraph{Wide-ResNet-16-4.}
Our CIFAR-10 and Imagenet experiments use a Wide-ResNet-16-4 architecture~\citep{zagoruyko2016wideresnet}. 
Wide-ResNets are a variant of residual networks in which the number of convolutional channels is increased by a widening factor $k$, while the total depth is reduced. We follow prior DP work~\citep{de2022unlocking, klause2022scalenorm} and set the depth to 16 layers and the widening factor to 4.

\paragraph{Normalization.}
To prevent privacy leakage through batch statistics, batch normalization is replaced with group normalization~\citep{wu2018groupnormalization}, using 16 groups per layer. Group normalization provides consistent feature scaling independent of the batch dimension, an essential property under small physical batches and Poisson subsampling.

\paragraph{Weight Standardization.}
We also apply weight standardization before each convolution, which normalizes convolutional filters to zero mean and unit variance. 

\paragraph{NF-ResNet.}
While our ImageNet experiments are conducted using the Wide-ResNet-16-4 architecture, we compare our results against the setup of \citet{de2022unlocking}, who trained an NF-ResNet-50 model under 
$(\varepsilon,\delta)=(8,8{\times}10^{-7})$-DP. 
The Normalized-Free Residual Network (NF-ResNet) replaces normalization layers with scaled activations and parameterizations, ensuring stable training without relying on batch statistics. 

\paragraph{DeiT Models.}
For private fine-tuning, we use Data-efficient Image Transformers (DeiT) pretrained on ImageNet. We evaluate DeiT-Tiny, DeiT-Small, and DeiT-Base, each based on the Vision Transformer (ViT) architecture with patch size $16\times16$ and input resolution $224\times224$. The classification head is replaced with a randomly initialized linear layer matching the target dataset (CIFAR-10 or CIFAR-100). 

\begin{figure*}[h]
\centering
\includegraphics[width=0.9\textwidth]{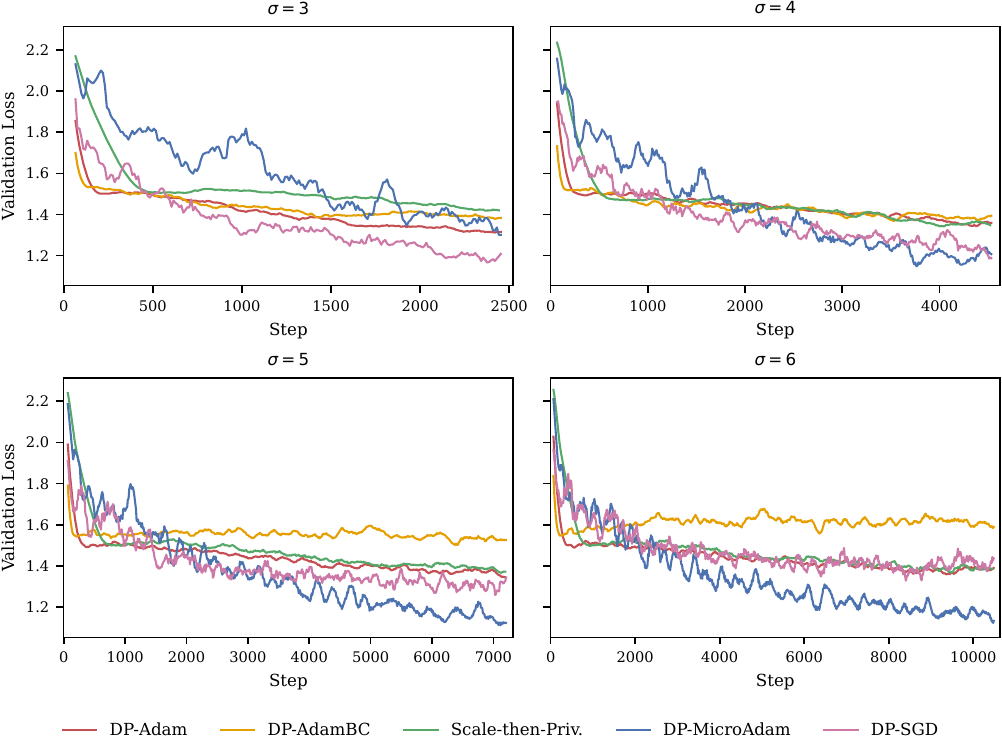}
\caption{Validation loss over training for CIFAR-10 under $(\varepsilon,\delta)=(8,10^{-5})$. Each subplot corresponds to a different Gaussian noise multiplier $\sigma$.}
\label{fig:val_loss_sigma}
\end{figure*}

\section{Additional Analysis and Results}

\subsection{CIFAR-10}
\label{app:moretables}
Unless otherwise stated, experiments use large effective batch sizes of approximately 4096 obtained via Poisson subsampling. 
We found that DP-MicroAdam is robust to the choice of clipping threshold and thus fix it to $C=1$ across all experiments. Since most related works also adopt $C=1$, this choice ensures fair comparability across methods. 
Results are reported as the median over three independent runs.

\subsection{Training on CIFAR-10}
We conduct experiments on the CIFAR-10 dataset.
The official training set of 50K images is split into 45K training and 5K validation examples, while final results are reported on the 10K test set. All images are normalized per channel and augmented with random crops and horizontal flips.

\paragraph{Model architecture.}
Following prior work on differentially private training~\citep{de2022unlocking,klause2022scalenorm,bu2024dpbitfit}, we adopt a Wide-ResNet-16-4~\citep{zagoruyko2016wideresnet}. 
Wide-ResNets use fewer layers but wider channels, which improves feature reuse, stabilizes gradient flow, and has been found empirically to be more robust to DP noise than deeper, narrower networks such as ResNet-20. 
To prevent privacy leakage from batch statistics, we replace batch normalization with group normalization~\citep{wu2018groupnormalization} using 16 groups. 
We also apply weight standardization before each convolution to further improve training stability.

\paragraph{Comparison setup.}
Using the training pipeline described above, we compare DP-MicroAdam to several established optimizers: DP-SGD~\citep{abadi2016}, DP-Adam, DP-AdamBC~\citep{tang2023dpadambc}, and Scale-then-Privatize~\citep{li2023delayedprecon}. 
All methods share the same architecture, data processing, and privacy accounting framework to ensure comparability. 
Standard hyperparameter settings are used for each optimizer. 
For adaptive methods, we adopt common defaults: $\epsilon = 10^{-8}$ for numerical stability, learning rate $\eta = 0.001$, and exponential decay rates $\beta_1 = 0.9$, $\beta_2 = 0.999$. 
For DP-SGD, we set the learning rate to $\eta = 4.0$. 
No weight decay is applied, and all optimizers use a fixed learning rate throughout training.

We fix the privacy budget to $(\varepsilon, \delta) = (8, 10^{-5})$ and conduct experiments with different Gaussian noise multipliers. For each noise level, training proceeds until the cumulative privacy loss reaches the target budget. 
\begin{table}[ht]
\centering
\caption{Test accuracy (\%) on CIFAR-10 under $(\varepsilon, \delta) = (8, 10^{-5})$ for different noise multipliers.}
\vspace{0.5em}
\label{tab:results_sigma}
\begin{tabular}{l|cccccc}
\toprule
\textbf{Noise} & \textbf{DP-Adam} & \textbf{DP-AdamBC} & \textbf{Scale-then-Priv.} & \textbf{DP-MicroAdam} & \textbf{DP-SGD} \\
\midrule
3 &64.4  &65.4  &61.2  &65.3  &\textbf{70.6}  \\
4 &65.1  &63.6  &64.5  &70.7  &\textbf{71.0}  \\
5 &65.5  &62.5  &64.2  &\textbf{71.4}  &68.4  \\
6 &65.0  &60.6  &63.6  &\textbf{70.1}  &67.0  \\
8 &63.5  &57.1  &63.5  &\textbf{69.4}  &65.9  \\
\bottomrule
\end{tabular}\end{table}

Table~\ref{tab:results_sigma} reports CIFAR-10 test accuracy for several differentially private optimizers under a fixed privacy budget of $(\varepsilon,\delta)=(8,10^{-5})$. 
DP-MicroAdam consistently outperforms adaptive baselines such as DP-Adam, DP-AdamBC, and Scale-then-Privatize, and achieves accuracy comparable to or exceeding DP-SGD at higher noise levels.
While traditional adaptive methods accumulate bias in their moment estimates and degrade rapidly as noise increases, DP-MicroAdam focuses updates on informative gradient coordinates and, as a result, maintains high accuracy even when Gaussian noise is large (e.g., $\sigma \ge 5$). 
 
The slightly lower performance observed at $\sigma=3$ is expected, as DP-MicroAdam updates only about 1\% of parameters per iteration and the number of steps allowed before reaching the privacy budget is relatively small. At higher noise levels, more steps can be taken, enabling the optimizer to reach its full potential.

We also compare all optimizers across different privacy budgets.
Table~\ref{tab:results_cifar} shows that DP-MicroAdam consistently outperforms other adaptive methods and slightly surpasses DP-SGD for all $\varepsilon > 1$, while its performance is less competitive only under the strictest privacy constraint.

\subsection{ImageNet}

In our experiments, we adopt EMA but do not use AugMult, as generating and processing multiple augmented views significantly increases the computational cost. 
Given the size and diversity of the ImageNet dataset, extensive augmentation may provide less benefit than on smaller datasets such as CIFAR-10. 
Following~\citet{de2022unlocking} we fix the Gaussian noise multiplier to $\sigma = 2.5$ and use a large effective batch size of 16{,}384. We report results for DP-MicroAdam using the same Wide-ResNet-16-4 architecture as in the CIFAR-10 experiments, while~\citet{de2022unlocking} employ a larger NF-ResNet-50 model. 

\subsection{Fine-tuning}

We run our experiments on a single GPU with \texttt{DeiT-Tiny}, \texttt{DeiT-Small}, and \texttt{DeiT-Base}, replacing the classification head with a randomly initialized linear layer corresponding to the target dataset.

All images are resized to $224\times224$, normalized using ImageNet statistics, and augmented with random crops and horizontal flips.  

We compare DP-MicroAdam against two leading approaches for sparse differentially private fine-tuning: 
DP-BiTFiT~\citep{bu2024dpbitfit}, which trains only the bias terms of each layer, and 
SPARTA~\citep{jang2025sparta}, which adaptively selects a small subset of parameters to update based on gradient magnitudes. 

\section{Validation Loss Curves Across Noise Levels}
To complement the accuracy results in Table~\ref{tab:results_sigma}, 
Figure~\ref{fig:val_loss_sigma} shows the validation loss trajectories for the Gaussian noise multipliers $\sigma \in \{3,4,5,6\}$. 
Each plot compares DP-MicroAdam with DP-Adam, DP-AdamBC, Scale-then-Priv., and DP-SGD under the same privacy budget $(\varepsilon,\delta)=(8,10^{-5})$. 
For each method, we report the median run over 5 independent experiments. 
To improve readability, loss curves are smoothed using a centered rolling mean over a window of 10 steps. This smoothing removes high-frequency noise and produces clearer visual trends without altering the relative behavior of the methods.

\section{Training Parameters}
This section details the hyperparameters used across all experimental settings considered in this paper. Tables~\ref{tab:cifar10_hparams}--\ref{tab:finetune_hparams} correspond respectively to the results reported in 
Tables~\ref{tab:results_sigma}–\ref{tab:results_finetune} of the main text.

\begin{table*}[h]
\centering
\caption{Hyperparameters for CIFAR-10 training with a WRN-16-4. 
Left: adaptive optimizers (DP-Adam, DP-AdamBC, Scale-then-Priv., DP-MicroAdam). 
Right: DP-SGD.}
\vspace{0.4em}
\label{tab:cifar10_hparams}
\begin{tabular}{lcc}
\toprule
\textbf{Hyperparameter} & \textbf{Adaptive Optimizers} & \textbf{DP-SGD} \\
\midrule
Batch size & 4096 & 4096 \\
Learning rate $\eta$ & 0.001 &  4.0 \\
Decay rates $(\beta_1,\beta_2)$ & (0.9,\,0.999) & - \\
Clipping norm $C$ & 1.0 & 1.0 \\
Augmentation & RandomCrop, Flip & RandomCrop, Flip \\
\bottomrule
\end{tabular}
\end{table*}

\begin{table}[h]
\centering
\caption{Hyperparameters for ImageNet training under $(\varepsilon,\delta)=(8,8{\times}10^{-7})$.}
\vspace{0.4em}
\label{tab:imagenet_hparams}
\begin{tabular}{lcc}
\toprule
\textbf{Hyperparameter} & \textbf{DP-MicroAdam} & \textbf{DP-SGD} \\
\midrule
Model & WRN-16-4 (GN16, WS) & NF-ResNet-50 \\
Batch size & 16,384 & 16,384 \\
Learning rate $\eta$ & 0.001 &  4.0 \\
Decay rates $(\beta_1,\beta_2)$ & (0.9,\,0.999) & - \\
Clipping norm $C$ & 1.0 & 1.0 \\
Noise multiplier $\sigma$ & 2.5 & 2.5 \\
Augmentation & RandomCrop, Flip & AugMult ($K{=}4$) \\
EMA decay & 0.99999 & 0.99999 \\
\bottomrule
\end{tabular}
\end{table}

\begin{table}[h!]
\centering
\caption{Hyperparameters for fine-tuning on CIFAR-10 and CIFAR-100 under $(\varepsilon,\delta)=(8,10^{-5})$. 
}
\vspace{0.4em}
\label{tab:finetune_hparams}
\begin{tabular}{lcc}
\toprule
\textbf{Hyperparameter} & \textbf{DP-MicroAdam} & \textbf{SPARTA / DP-BiTFiT} \\
\midrule
Batch size & 4096 & 500 \\
Learning rate $\eta$ & 0.001 & 0.001 \\
Scheduler & Cosine & Cosine \\
Warm-up ratio & 0.02 & 0.02 \\
Decay rates $(\beta_1,\beta_2)$ & (0.9,\,0.999) & – \\
Clipping norm $C$ & 1.0 & 1.0 \\
Noise multiplier $\sigma$ & 1.56 (CIFAR-10) / 2 (CIFAR-100) & 0.8\textcolor{gray}{*} \\ 
Epochs & 50 (CIFAR-10) / 90 (CIFAR-100) & 50 \\
Augmentation & RandomCrop, Flip & RandomCrop, Flip \\
\bottomrule
\end{tabular}
\par\vspace{0.4em}
\textit{ *Estimated from reported privacy budget, as not provided in the original paper.} 
\end{table}

\section{Runtime and Compute Configuration}
This section reports the approximate compute requirements for each experiment. Runtime measurements correspond to training until the privacy budget reaches $(\varepsilon,\delta)$.

\begin{table*}[h]
\centering
\caption{Approximate training time on a single H100 GPU to reach $(\varepsilon,\delta)=(8,10^{-5})$ on CIFAR-10 with WRN-16-4 under different noise multipliers $\sigma$. 
Training uses Poisson subsampling with an effective batch size of 4096 and a physical batch size of 128.}
\vspace{0.4em}
\label{tab:runtime_sigma}
\begin{tabular}{lccccc}
\toprule
\textbf{Noise multiplier $\sigma$} & 3 & 4 & 5 & 6 & 8 \\
\midrule
Training time & 1h 34m & 2h 47m & 4h 15m & 6h 00m & 10h 24m \\
Number of updates & 2480  & 4556  & 7227 &10492  &18798  \\
\bottomrule
\end{tabular}
\end{table*}

\begin{table*}[h]
\centering
\caption{Approximate training time for ImageNet under $(\varepsilon,\delta)=(8,8{\times}10^{-7})$. 
Both setups use an effective batch size of 16{,}384 and a physical batch size of 128.}
\vspace{0.4em}
\label{tab:imagenet_runtime}
\begin{tabular}{lcc}
\toprule
\textbf{Configuration} & \textbf{DP-MicroAdam} & \textbf{\citet{de2022unlocking}} \\
\midrule
Hardware & 8$\times$H100 GPUs & 32$\times$H100 GPUs \\
Training time & 8d 15h & 4d \\
Number of updates & 71{,}528 & 71{,}589 \\
\bottomrule
\end{tabular}
\end{table*}

\begin{table*}[h!]
\centering
\caption{Approximate fine-tuning time on CIFAR-10 and CIFAR-100 on a single H100 GPU to reach $(\varepsilon,\delta)=(8,10^{-5})$. 
Training uses an effective batch size of 4096 and a physical batch size of 128 for DeiT-Tiny and DeiT-Small, and 64 for DeiT-Base.}
\vspace{0.4em}
\label{tab:runtime_finetune}
\begin{tabular}{l|lccc}
\toprule
\textbf{Dataset} & \textbf{Model} & \textbf{Training time} & \textbf{Number of updates} \\
\midrule
\multirow{3}{*}{CIFAR-10} 
 & DeiT-Tiny  & 39m  & 539 \\
 & DeiT-Small & 1h 25m & 539 \\
 & DeiT-Base  & 4h 15m & 539 \\
\midrule
\multirow{3}{*}{CIFAR-100} 
 & DeiT-Tiny  & 1h 20m & 994 \\
 & DeiT-Small & 2h 37m & 994 \\
 & DeiT-Base  & 7h 33m & 994 \\
\bottomrule
\end{tabular}
\end{table*}

\end{document}